\def\supp{1} 

\documentclass{article}
\pdfoutput=1
\usepackage{fullpage}
\usepackage{amstext,amssymb,amsmath}
\usepackage{amsthm}
\usepackage{mathtools}
\usepackage{verbatim}
\usepackage{paralist}
\usepackage{dsfont}
\usepackage{hyperref}
\usepackage[noend]{algorithmic}
\usepackage{algorithm}


\newcommand{\nuc}[1]{\left\|#1\right\|_{\sf nuc}}

\newcommand{\frob}[1]{\left\|#1\right\|_{F}}

\newtheorem{theorem}{Theorem}[section]
\newtheorem{infTheorem}{Informal Theorem}[section]
\newtheorem{corollary}{Corollary}[section]
\newtheorem{lemma}[theorem]{Lemma}
\newtheorem{definition}[theorem]{Definition}

\newcommand{\ip}[2]{\left\langle #1,#2\right\rangle}

\newcommand{\lfrob}[1]{\left\|#1\right\|_F}

\renewcommand{\paragraph}[1]{\vspace{3pt}\noindent\textbf{#1}}

\newcommand{\ltwo}[1]{\left\|#1\right\|_2}

\newcommand{\eps}{\epsilon}
\newcommand{\A}{\mathcal{A}}

\newcommand{\I}{\mathbb{I}}

\newcommand{\E}{\mathbb{E}}

\newcommand{\re}{\Re}

\newcommand{\grad}{\bigtriangledown}

\newcommand{\mypar}[1]{\noindent{\bf\em {#1}:}}

 \newcommand{\mat}[1]{{#1}\,}
\newcommand{\ktnote}[1]{}

\newcommand{\ignore}[1]{}

\DeclareMathOperator*{\argmin}{{\sf argmin}}

\theoremstyle{remark}
\newtheorem{remark}{Remark}

\newcommand{\pomega}{{\sf P}_\Omega}
\newcommand{\matT}[2]{{{#1}^{(#2)}}}

\newcommand{\halpha}{{\widehat \alpha}}

\newcommand{\jc}{joint differential privacy\,}

\newcommand{\Jc}{Joint differential privacy\,}
\newcommand{\rank}{{\sf rank}\,}
\newcommand{\Ypriv}{Y}

\newcommand{\Yp}{{\widehat{Y}}}

\newcommand{\hY}{{\widehat{\mat{Y}}}}
\newcommand\numberthis{\addtocounter{equation}{1}\tag{\theequation}}
\newcommand{\vp}{{\widehat{V}}}
\newcommand{\Pip}{{\widehat{\Pi}}}

\newcommand{\om}[1]{\ifnum\supp=1\textcolor{green}{[Om: #1]}\else\ignorespaces\fi}
\newcommand{\todo}[1]{\ifnum\supp=1\textcolor{green}{[TODO: #1]}\else\ignorespaces\fi}
\newcommand{\pj}[1]{\ifnum\supp=1\textcolor{red}{[PJ: #1]}\else\ignorespaces\fi}
\newcommand{\at}[1]{\ifnum\supp=1\textcolor{blue}{[AT: #1]}\else\ignorespaces\fi}
\makeatletter
\def\blfootnote{\gdef\@thefnmark{}\@footnotetext}
\makeatother

\begin{document}
\title{Differentially Private Matrix Completion Revisited\blfootnote{Accepted for presentation at International Conference on Machine Learning (ICML) 2018.}}

\author{Prateek Jain\thanks{Microsoft Research. Email:
    \texttt{prajain@microsoft.com}.} \and Om Thakkar\thanks{Department of Computer Science,
   Boston University. Email:
    \texttt{omthkkr@bu.edu}.}  \and Abhradeep Thakurta\thanks{Computer Science Department,
    University of California Santa Cruz. Email:
    \texttt{aguhatha@ucsc.edu}.} }
\maketitle

\begin{abstract}

We provide the first provably joint differentially private algorithm with formal utility guarantees for the problem of \emph{user-level} privacy-preserving collaborative filtering. Our algorithm is based on the Frank-Wolfe method, and it consistently estimates the underlying preference matrix as long as the number of users $m$ is $\omega(n^{5/4})$, where $n$ is the number of items, and each user provides her preference for at least $\sqrt{n}$ randomly selected items. Along the way, we provide an optimal differentially private algorithm for singular vector computation, based on the celebrated Oja's method, that provides significant savings in terms of space and time while operating on sparse matrices. We also empirically evaluate our algorithm on a suite of datasets, and show that it consistently outperforms the state-of-the-art {\em private} algorithms. 
\end{abstract}

\section{Introduction}
\label{sec:intro}

Collaborative filtering (or matrix completion) is a popular approach for modeling the recommendation system problem, where the goal is to provide personalized recommendations about certain items to a user \cite{KorenB15}. In other words, the objective of a personalized recommendation system is to learn the entire users-items preference matrix $Y^*\in \Re^{m\times n}$ using a small number of user-item preferences $Y^*_{ij}, (i,j)\in [m]\times [n]$, where $m$ is the number of users and $n$ is the number of items.  Naturally, in absence of any structure in $Y^*$, the problem is ill-defined as the unknown entries of $Y^*$ can be arbitrary. Hence, a popular modeling hypothesis is that the underlying preference matrix $Y^*$ is low-rank, and thus, the collaborative filtering problem reduces to that of low-rank matrix completion \cite{recht2011simpler, candes2012exact}.  One can also enhance this formulation using side-information like user-features or item-features  \cite{inductivemc}.

Naturally, personalization problems require collecting and analyzing sensitive customer data like their preferences for various items, which can lead to serious privacy breaches \cite{korolova2010privacy,NS10,CalandrinoKNFS11}. In this work, we attempt to address this problem of privacy-preserving recommendations using collaborative filtering \cite{MM09,liu2015fast}. We  answer the following question in the {\bf affirmative}: \emph{Can we design a matrix completion algorithm which keeps {\bf all} the ratings of a user private, i.e., guarantees {\bf user-level privacy} while still providing accurate recommendations?} In particular, we provide the \emph{first} differentially private \cite{DMNS} matrix completion algorithms with {\em provable} accuracy guarantees. Differential privacy (DP) is a rigorous privacy notion which formally protects the privacy of any user participating in a statistical computation by controlling her influence to the final output.  

Most of the prior works on DP matrix completion (and low-rank approximation) \cite{BDMN05,ChanSS11,HR12,HR13,KapralovT13,dwork2014analyze} have provided guarantees which are non-trivial only in the {\em entry-level} privacy setting, i.e., they preserve privacy of only a single rating of a user. Hence, they are not suitable for preserving a user's privacy in practical recommendation systems. In fact, their trivial extension to user-level privacy leads to vacuous bounds (see Table~\ref{tab:bounds}).  Some works \cite{MM09,liu2015fast} do serve as an exception, and directly address the user-level privacy problem. However, they only show empirical evidences of their effectiveness; they do not provide formal error bounds.\footnote{In case of \cite{liu2015fast}, the DP guarantee itself might require an exponential amount of computation.} In contrast, we provide an efficient algorithm based on the classic Frank-Wolfe (FW) procedure \cite{FW56}, and show that it gives strong utility guarantees while preserving user-level privacy. Furthermore, we empirically demonstrate its effectiveness on various benchmark datasets.

Our private FW procedure needs to compute the top right singular vector of a sparse user preference matrix, while preserving DP. For practical recommendation systems with a large number of items, this step turns out to be a significant bottleneck both in terms of space as well as time complexity. To alleviate this issue, we provide a method, based on the celebrated Oja's algorithm \cite{JJKNS16}, which is nearly optimal in terms of the accuracy of the computed singular vector while still providing significant improvement in terms of space and computation. In fact, our method can be used to speed-up even the vanilla differentially private PCA computation \cite{DTTZ13}. To the best of our knowledge, this is the first algorithm for DP singular value computation with optimal utility guarantee, that also exploits the sparsity of the underlying matrix.

\mypar{Notion of privacy} To measure privacy, we select \emph{differential privacy}, which is a de-facto privacy notion for large-scale learning systems, and has been widely adopted by the academic community as well as big corporations like Google \cite{erlingsson2014rappor}, Apple \cite{apple}, etc. The underlying principle of \emph{standard DP} is that the output of the algorithm should not change significantly due to  presence or absence of any user. In the context of matrix completion, where the goal is to release the {\em entire preference matrix} while preserving privacy, this implies  that the computed ratings/preferences for any particular user cannot depend strongly on {\em her own personal preferences}. Naturally, the resulting preference computation is going to be trivial and inaccurate (which also follows from the reconstruction attacks of \cite{DN03} and \cite{HR12}).

To alleviate this concern, we consider a relaxed but natural DP notion (for recommendation systems) called \emph{\jc} \cite{kearns2014mechanism}. Consider an algorithm $\mathcal{A}$ that produces individual outputs $Y_i$ for each user $i$, i.e., the $i$-th row of preference matrix $Y$. Joint DP  ensures that for each user $i$, the output of $\mathcal{A}$ for all other users (denoted by $Y_{-i}$) does not reveal ``much'' about the preferences of user $i$. That is, the recommendations made to all the users except the $i$-th user do not depend significantly upon the $i$-th user's preferences. Although not mentioned explicitly,  previous works on DP matrix completion \cite{MM09,liu2015fast}  strive to ensure Joint DP. Formal definitions are provided in Section \ref{sec:privDef}.

\mypar{Granularity of privacy} DP protects the information about a user in the context of presence or absence of her data record. Prior works on DP matrix completion \cite{MM09,liu2015fast}, and its close analogue, low-rank approximation \cite{BDMN05,ChanSS11,HR12,DTTZ13, HR13}, have considered different variants of the notion of a data record. Some have considered a single entry in the matrix $Y^*$ as a data record (resulting in \emph{entry-level privacy}), whereas others have considered a more practical setting where the complete row is a data record (resulting in \emph{user-level privacy}). In this work, we present all our results in the strictly harder user-level privacy setting. To ensure a fair comparison, we present the results of prior works in the same setting.

\subsection{Problem definition: Matrix completion}
\label{sec:matrixCompletion}
The goal of a low-rank matrix completion problem is to estimate a low-rank (or a convex relaxation of bounded nuclear norm) matrix $Y^*\in \re^{m\times n}$, having seen only a small number of entries from it. Here, $m$ is the number of users, and $n$ is the number of items. Let $\Omega=\{(i,j)\subseteq [m]\times [n]\}$ be the index set of the observed entries from $\mat{Y}^*$, and let $\pomega: \re^{m\times n} \rightarrow \re^{m\times n}$ be a matrix  operator s.t. $\pomega(Y)_{ij}=Y_{ij}$ if $(i,j)\in \Omega$, and $0$ otherwise. Given, $\pomega(Y^*)$, the objective is to output a matrix $Y$ such that the following generalization error, i.e., the error in approximating a uniformly random entry from the matrix $Y^*$, is minimized:
\begin{equation}
F(Y)=\E_{(i,j)\sim_{\sf unif}[m] \times [n]}\left[\left(Y_{ij}-Y^*_{ij}\right)^2\right].
\label{eqn: gen_error_def}
\end{equation}
Generalization error captures the ability of an algorithm to predict unseen samples from $Y^*$. We would want the generalization error to be $o(1)$ in terms of the problem parameters when $\Omega = o(mn)$. Throughout the paper, we will assume that $m > n$. 

\subsubsection{Our contributions}
\label{sec:contrib}
In this work, we provide the first joint DP algorithm for low-rank matrix completion with formal non-trivial error bounds, which are summarized in Tables \ref{tab:bounds} and \ref{tab:bounds1}. At a high level, our key result can be  summarized as follows: 
\begin{infTheorem}[Corresponds to Corollary \ref{cor:abcfd12}]
	Assume that each entry of a hidden matrix $Y^*\in\re^{m\times n}$ is in $[-1,1]$, and there are $\sqrt{n}$ observed entries per user. Also, assume that the nuclear norm of $Y^*$ is bounded by $O(\sqrt{mn})$, i.e., $Y^*$ has nearly constant rank.
	Then, there exist $(\epsilon,\delta)$-joint differentially private algorithms that have $o(1)$ generalization error as long as $m=\omega(n^{5/4})$.
	\label{thm:kjnjkfjkf132}
\end{infTheorem}
In other words, even with $\sqrt{n}$ observed ratings per user, we obtain asymptotically the correct estimation of each entry of $Y^*$  on average, as long as $m$ is large enough. The sample complexity bound dependence on $m$ can be strengthened by making additional assumptions, such as \emph{incoherence}, on $Y^*$.
\ifnum \supp=0
See the supplementary material for details.
\else
See Appendix~\ref{app:SVDapprox} for details.
\fi 

Our algorithm is based on two important ideas: a) using local and global computation, b) using the Frank-Wolfe method as a base optimization technique.

\mypar{Local and global computation} The key idea that defines our algorithm, and allows us to get strong error bounds under joint DP is splitting the algorithm into two components: \emph{global} and \emph{local}. Recall that each row of the hidden matrix $Y^*$ belongs to an individual user. The global component of our algorithm computes statistics that are aggregate in nature (e.g., computing the correlation across columns of the revealed matrix $\pomega(Y^*)$). On the other hand, the local component independently fine-tunes the statistics computed by the global component to generate accurate predictions for each user. Since the global component depends on the data of all users, adding noise to it (for privacy) does not significantly affect the accuracy of the predictions. \cite{MM09,liu2015fast} also exploit a similar idea of segregating the computation, but they do not utilize it formally to provide non-trivial error bounds.

\mypar{Frank-Wolfe based method} We use the standard nuclear norm formulation \cite{recht2011simpler,shalev2011large, tewari2011greedy, candes2012exact} for the matrix completion problem: {\small \begin{equation}
\label{eq:nucnorm}
\min\limits_{\nuc{Y}\leq k} \widehat{F}(Y),\end{equation}}
	where $\widehat{F}(Y) = \frac{1}{2|\Omega|}\|\pomega(Y-Y^*)\|_F^2$,  $\nuc{Y}$ is the sum of singular values of $Y$, and the underlying hidden matrix $Y^*$ is assumed to have nuclear norm of at most $k$. Note that we denote the empirical risk of a solution $Y$ by $\widehat{F}(Y)$ throughout the paper. We use the popular Frank-Wolfe algorithm \cite{FW56,jaggi2010simple} as our algorithmic building block. At a high-level, FW computes the solution to \eqref{eq:nucnorm} as a convex combination of rank-one matrices, each with nuclear norm at most $k$. These matrices are added iteratively to the solution.
	
Our main contribution is to design a version of the FW method that preserves Joint DP. That is, if the standard FW algorithm decides to add matrix $u\cdot v^T$ during an iteration, our private FW computes a noisy version of $v\in \re^n$ via its global component. Then, each user computes the respective element of $u\in \Re^m$ to obtain her update. The noisy version of $v$ suffices for the Joint DP guarantee, and allows us to provide the strong error bound in Theorem \ref{thm:kjnjkfjkf132} above. 

We want to emphasize that the choice of FW as the underlying matrix completion algorithm is critical for our system. FW updates via rank-one matrices in each step. Hence, the error due to noise addition in each step is small (i.e., proportional to the rank), and allows for an easy decomposition into the local-global computation model. Other standard techniques like proximal gradient descent based techniques \cite{svt, ialm} can involve nearly {\em full-rank} updates in an iteration, and hence might incur large error, leading to arbitrary inaccurate solutions. Note that though a prior work \cite{TTZ15} has proposed a DP Frank-Wolfe algorithm for high-dimensional regression, it was for a completely different problem in a different setting where the segregation of computation into global and local components was not necessary.

\mypar{Private singular vector of sparse matrices using Oja's method} 
Our private FW requires computing a noisy covariance matrix which implies $\Omega(n^2)$ space/time complexity for $n$ items. Naturally, such an algorithm does not scale to practical recommendation systems. In fact, this drawback exists even for standard private PCA techniques \cite{DTTZ13}. Using insights from the popular Oja's method, we provide a technique (see Algorithm~\ref{Alg:Eigs}) that has a linear dependency on $n$ as long as the number of ratings per user is small. Moreover, the performance of our private FW method isn't affected by using this technique. 

\mypar{SVD-based method} 
\ifnum \supp=0
In the supplementary material,
\else
In Appendix~\ref{app:SVDapprox},
\fi
 we also extend our technique to a singular value decomposition (SVD) based method for matrix completion/factorization. Our utility analysis shows that there are settings where this method outperforms our FW-based method, but in general it can provide a significantly worse solution. The main goal is to study the power of the simple SVD-based method, which is still a popular method for collaborative filtering.

 \mypar{Empirical results} Finally, we show that along with providing strong analytical guarantees, our private FW also performs well empirically. In particular, we show its efficacy on benchmark collaborative filtering datasets like Jester \cite{Jester}, MovieLens \cite{Harper2}, the Netflix prize dataset \cite{Netf}, and the Yahoo! Music recommender dataset \cite{YahooMov}. Our algorithm consistently outperforms (in terms of accuracy) the existing state-of-the-art DP matrix completion methods (SVD-based method by \cite{MM09}, and a variant of projected gradient descent \cite{cai2010singular,BassilyST14,DPDL}). 

\subsection{Comparison to prior work}
\label{sec:compPre}
As discussed earlier, our results are the first to provide non-trivial error bounds for DP matrix completion. For comparing different results, we consider the following setting of the hidden matrix $Y^*\in\re^{m\times n}$ and the set of released entries $\Omega$: i) $|\Omega|\approx m\sqrt n$, ii) each row of $Y^*$ has an $\ell_2$ norm of $\sqrt n$, and iii) each row of $\pomega(Y^*)$ has $\ell_2$-norm at most $n^{1/4}$, \emph{i.e.,} $\approx \sqrt{n}$ random entries are revealed for each row. Furthermore, we assume the spectral norm of $Y^*$ is at most $O(\sqrt{mn})$, and $Y^*$ is rank-one. Note that these conditions are satisfied by a matrix $Y^*=u\cdot v^T$ where $u_i, v_j \in [-1, 1]$ $\forall i,j$, and $\sqrt{n}$ random entries are observed {\em per user}. 

\begin{table}[h]
\parbox{.42\linewidth}{
\begin{center} 
\begin{tabular}{|c|c|c|}
\hline
{\bf Algorithm}  &{\bf Bound} &{\bf Bound}\\
 &{\bf on $m$} &{\bf on $|\Omega|$}\\
\hline
Nuclear norm min.    &$\omega(n)$ &$\omega(m\sqrt n)$\\
 (non-private) \cite{shalev2011large}   &  & \\
\hline
Noisy SVD + kNN \cite{MM09}   & -- & --\\
\hline
Noisy SGLD \cite{liu2015fast}   & -- & --\\
\hline
Private FW  (This work) & $\omega(n^{5/4})$ & $\omega(m\sqrt n)$\\
\hline
\end{tabular}
\end{center}
\caption{Sample complexity bounds for matrix completion. $m = $ no. of users, $n =$ no. of items. The bounds hide privacy parameters $\epsilon$ and $\log(1/\delta)$, and polylog factors in $m$, $n$.} \label{tab:bounds}
}
\hfill
\parbox{.53\linewidth}{
\begin{center} 
\begin{tabular}{|c|c|c|}
\hline
{\bf Algorithm} & {\bf Error} \\
\hline 
Randomized response \cite{BDMN05,ChanSS11,dwork2014analyze} &  $O(\sqrt{m+n})$ \\
\hline
Gaussian measurement \cite{HR12}  &  $O\left(\sqrt{m} + \sqrt{\frac{\mu n}{m}}\right)$ \\
\hline
Noisy power method \cite{HR13}   &   $O(\sqrt{\mu})$ \\
\hline
Exponential mechanism \cite{KapralovT13} & $O(m+n)$ \\
\hline
Private FW (This work)& $O\left({m^{3/10}n^{1/10}}\right)$ \\
\hline
Private SVD (This work)&  ${O\left(\sqrt{\mu\left(\frac{n^2}{m}+  \frac{m}{n}\right)}\right)}$ \\
 \hline
\end{tabular}
\end{center}
\caption{Error bounds ($\|Y-Y^*\|_F$) for low-rank approximation. 
$\mu\in[0,m]$ is the \ifnum\supp=1
incoherence parameter (Definition~\ref{def:incoherence}).
\else
\emph{incoherence} parameter.
\fi The bounds hide privacy parameters $\epsilon$ and $\log(1/\delta)$, and polylog factors in $m$ an $n$. 
Rank of the output matrix $Y_{\sf priv}$ is $O\left(m^{2/5}/n^{1/5}\right)$ for Private FW, whereas it is $O(1)$ for the others.}  \label{tab:bounds1}
}
\end{table}

In Table \ref{tab:bounds}, we provide a comparison based on the sample complexity, i.e., the number of users $m$ and the number observed samples $|\Omega|$ needed to attain a generalization error of $o(1)$. We compare our results with the best non-private algorithm for matrix completion based on nuclear norm minimization \cite{shalev2011large}, and the prior work on DP matrix completion \cite{MM09,liu2015fast}. We see that for the same $|\Omega|$, the sample complexity on $m$ increases from $\omega(n)$ to $\omega(n^{5/4})$ for our FW-based algorithm. 
While \cite{MM09,liu2015fast} work under the notion of Joint DP as well, they do not provide any formal accuracy guarantees.

\emph{Interlude: Low-rank approximation.} We also compare our results with the prior work on a related problem of DP low-rank approximation. Given a matrix $Y^*\in\re^{m\times n}$, the goal is to compute a DP low-rank approximation $Y_{\sf priv}$, s.t. $Y_{\sf priv}$ is close to $Y^*$ either in the spectral or Frobenius norm. Notice that this is similar to matrix completion if the set of revealed entries $\Omega$ is the complete matrix. Hence, our methods can be applied directly. To be consistent with the existing literature, we assume that $Y^*$ is rank-one matrix, and each row of $Y^*$ has $\ell_2$-norm at most one 
. Table \ref{tab:bounds1} compares the various results. While all the prior works provide trivial error bounds (in both Frobenius and spectral norm, as $\ltwo{Y^*}=\lfrob{Y^*}\leq \sqrt m$), our methods provide non-trivial bounds. The key difference is that we ensure Joint DP (Definition~\ref{defn:jc}), while existing methods ensure the stricter standard DP (Definition~\ref{defn:dp}), with the exponential mechanism \cite{KapralovT13} ensuring $(\eps,0)$-standard DP.

\section{Background: Notions of privacy}
\label{sec:privDef}

Let $D=\{d_1,\cdots,d_m\}$ be a dataset of $m$ entries. Each entry $d_i$ lies in a fixed domain $\mathcal{T}$, and belongs to an individual $i$, whom we refer to as an \emph{agent} in this paper. Furthermore, $d_i$ encodes potentially sensitive information about agent $i$. Let $\mathcal{A}$ be an algorithm that operates on dataset $D$, and produces a vector of $m$ outputs, one for each agent $i$ and from a set of possible outputs $\mathcal{S}$. Formally, let $\mathcal{A}:\mathcal{T}^m\to\mathcal{S}^m$. Let $D_{-i}$ denote the dataset $D$ without the entry of the $i$-th agent, and similarly $\mathcal{A}_{-i}(D)$ be the set of outputs without the output for the $i$-th agent. Also, let  $(d_i;D_{-i})$ denote the dataset obtained by adding data entry $d_i$ to the dataset $D_{-i}$. In the following, we define both \emph{standard differential privacy} and \emph{\jc}, and contrast them.

\begin{definition}[Standard differential privacy \cite{DKMMN06,DMNS}]\label{defn:dp}
	An algorithm $\mathcal{A}$ satisfies $(\epsilon,\delta)$-differential privacy if for any agent $i$, any  two possible values of data entry $d_i,d'_i\in\mathcal{T}$ for agent $i$, any tuple of data entries for all other agents, $D_{-i}\in\mathcal{T}^{m-1}$, and any output  $S\in\mathcal{S}^m$, we have  
	$$\Pr\limits_{\mathcal{A}}\left[\mathcal{A}\left(d_i;D_{-i}\right)\in S\right]\leq e^\epsilon\Pr\limits_{\mathcal{A}}\left[\mathcal{A}\left(d'_i;D_{-i}\right)\in S\right]+\delta.$$\label{def:diffPrivacy} 
\end{definition}
At a high-level, an algorithm $\A$ is $(\epsilon,\delta)$-standard DP if for any agent $i$ and dataset $D$, the output $\mathcal{A}(D)$ and $D_{-i}$ do not reveal ``much'' about her data entry $d_i$. 
For reasons mentioned in Section \ref{sec:intro}, our matrix completion algorithms provide privacy guarantee based on a relaxed notion of DP, called \emph{\jc}, which was initially proposed in \cite{kearns2014mechanism}. At a high-level, an algorithm $\A$ preserves $(\epsilon,\delta)$-joint DP if for any agent $i$ and dataset $D$,  the output of  $\A$ for the other $(m-1)$ agents (denoted by $\mathcal{A}_{-i}(D)$) and $D_{-i}$ do not reveal ``much'' about her data entry $d_i$. Such a relaxation is necessary for matrix completion because an accurate completion of the row of an agent can reveal a lot of information about her data entry. However, it is still a very strong privacy guarantee for an agent even if every other agent colludes against her, as long as she does not make the predictions made to her public.

\begin{definition}[\Jc \cite{kearns2014mechanism}]\label{defn:jc} \label{def:jDiffPrivacy}
	An algorithm $\mathcal{A}$ satisfies $(\epsilon,\delta)$-joint differential privacy if for any agent $i$, any  two possible values of data entry $d_i,d'_i\in\mathcal{T}$ for agent $i$, any tuple of data entries for all other agents, $D_{-i}\in\mathcal{T}^{m-1}$, and any output  $S\in\mathcal{S}^{m-1}$,   
	$$\Pr\limits_{\mathcal{A}}\left[\mathcal{A}_{-i}\left(d_i;D_{-i}\right)\in S\right]\leq e^\epsilon\Pr\limits_{\mathcal{A}}\left[\mathcal{A}_{-i}\left(d'_i;D_{-i}\right)\in S\right]+\delta.$$
\end{definition}
In this paper, we consider the privacy parameter $\epsilon$ to be a small constant ($\approx 0.1$), and $\delta < 1/m$. There are semantic reasons for such choice of parameters \cite{KS08}, but that is beyond the scope of this work.

\section{Private matrix completion via Frank-Wolfe}
\label{sec:privFW}

Recall that the objective is to solve the matrix completion problem (defined in Section \ref{sec:matrixCompletion}) under Joint DP. A standard modeling assumption is that $Y^*$ is nearly low-rank, leading to the following empirical risk minimization problem \cite{keshavan2010matrix,jain2013low,jin2016provable}: $ \min\limits_{{\sf rank}(Y)\leq k} \underbrace{\frac{1}{2|\Omega|}\|\pomega(Y-Y^*)\|_F^2}_{\widehat{F}(Y)}$, where $k\ll \min(m,n)$. As this is a challenging non-convex optimization problem, a popular approach is to relax the rank constraint to a nuclear-norm constraint, i.e., $\min\limits_{\nuc{Y}\leq k} \widehat{F}(Y)$. 

To this end, we use the FW algorithm 
\ifnum \supp=0
(see the supplementary material
\else
(see Appendix \ref{sec:FW}
\fi
 for more details) as our building block. FW is a popular conditional gradient algorithm in which the current iterate is updated as: $\matT{Y}{t}\leftarrow (1-\eta)\matT{Y}{t-1}+\eta\cdot G$, where $\eta$ is the step size, and $G$ is given by: $\argmin\limits_{ \nuc{G}\leq k}\ \ip{G}{\nabla_{\matT{Y}{t-1}} \widehat{F}(Y)}$.
Note that the optimal solution to the above problem is  $G=-k\mathbf{u}\mathbf{v}^\top$, where ($\lambda$, $\mathbf{u}$, $\mathbf{v}$) are the top singular components of $\matT{A}{t-1}=\pomega(\matT{Y}{t-1}-Y^*)$. Also, the optimal $G$ is a rank-one matrix.

\mypar{Algorithmic ideas} In order ensure Joint DP and still have strong error guarantees, we develop the following ideas. These ideas have been formally compiled into Algorithm \ref{Algo:PrivFW}. Notice that both the functions $\mathcal{A}_{\sf global}$ and $ \mathcal{A}_{\sf local}$ in Algorithm \ref{Algo:PrivFW} are parts of the Private FW technique, where $ \mathcal{A}_{\sf global}$ consists of the global component, and each user runs $\mathcal{A}_{\sf local}$ at her end to carry out a local update. Throughout this discussion, we assume that $\max\limits_{i \in [m]} \ltwo{\pomega(Y^*_i)}\leq L$.

\emph{Splitting the update into global and local components}: One can equivalently write the Frank-Wolfe update as follows: $\matT{Y}{t}\leftarrow (1-\eta)\matT{Y}{t-1}-\eta \cdot \frac{k}{\lambda} \matT{A}{t-1}\mathbf{v} \mathbf{v}^\top$, where $\matT{A}{t-1}, \mathbf{v},$ and $\lambda$ are defined as above. Note that $\mathbf{v}$ and $\lambda^2$ can also be obtained as the top right eigenvector and eigenvalue of $\matT{A}{t-1}^\top \matT{A}{t-1}=\sum\limits_{i=1}^m \matT{A_i}{t-1}^\top \matT{A_i}{t-1}$,  where $\matT{A_i}{t-1}=\pomega(\matT{Y_i}{t-1}-Y_i^*)$ is the $i$-th row of $\matT{A}{t-1}$. We will use the \emph{global component} $ \mathcal{A}_{\sf global}$  in Algorithm \ref{Algo:PrivFW} to compute $\mathbf{v}$ and  $\lambda$. Using the output of $ \mathcal{A}_{\sf global}$, each user (row) $i\in[m]$ can compute her \emph{local update} (using  $ \mathcal{A}_{\sf local}$) as follows:
	{\begin{equation}\matT{Y_i}{t}=(1-\eta)\matT{Y_i}{t-1}- \frac{\eta k}{{\lambda}} \pomega(\matT{Y}{t-1}-Y^*)_i\mathbf{v} \mathbf{v}^\top. \label{eq:abcdas}\end{equation}}
A block schematic of this idea is presented in Figure \ref{fig:JointModel}.

\begin{figure}[ht]
	\begin{center}
		\centerline{\includegraphics[width=0.8\columnwidth]{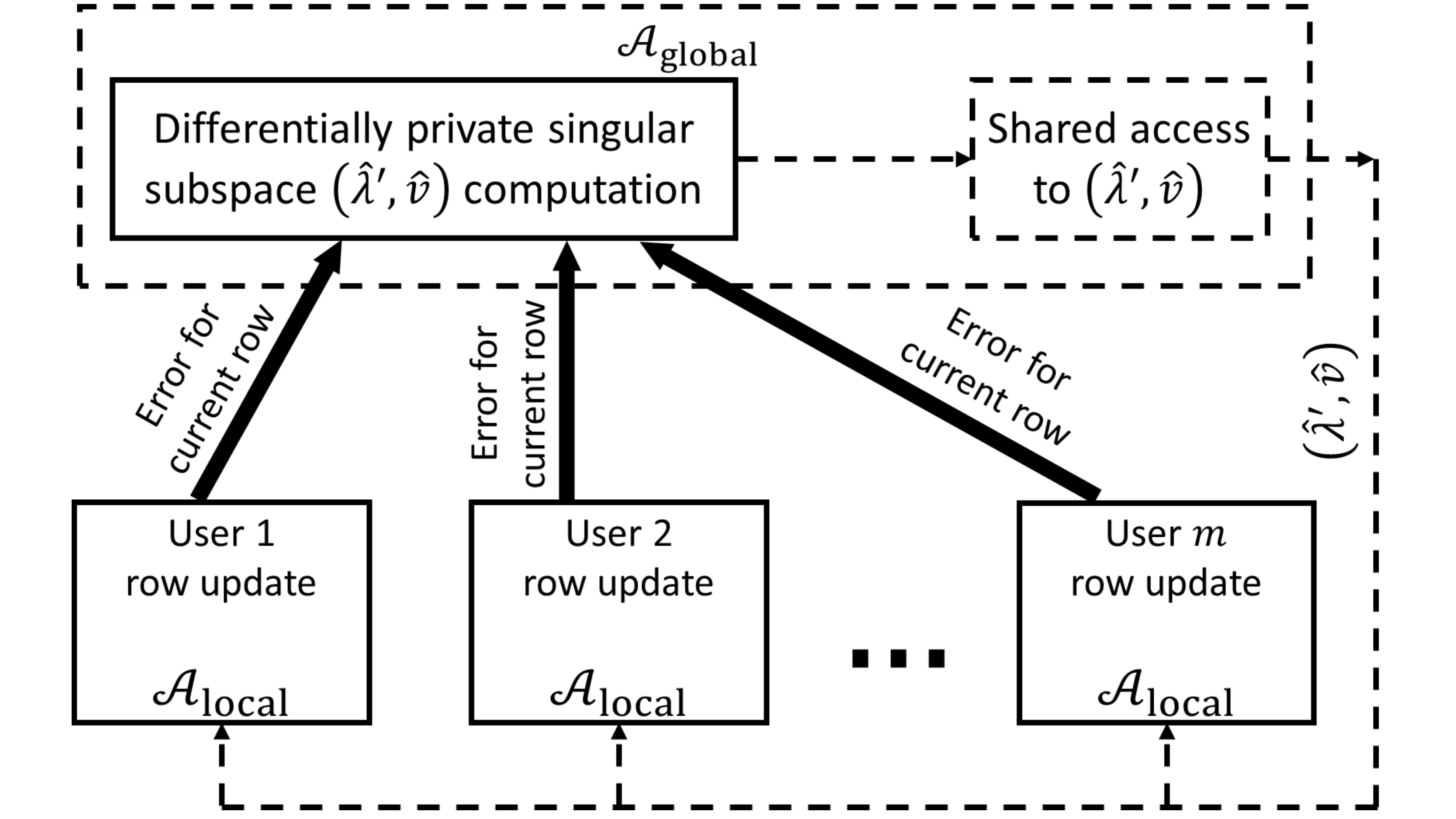}}
	\end{center}
	\caption{Block schematic describing the two functions $\mathcal{A}_{\sf local}$ and $\mathcal{A}_{\sf global}$ of Algorithm \ref{Algo:PrivFW}. The solid boxes and arrows represent computations that are privileged and without external access, and the dotted boxes and arrows represent the unprivileged computations.}
	\label{fig:JointModel}
\end{figure}

\begin{algorithm}
\caption{Private Frank-Wolfe algorithm}
\label{Algo:PrivFW}
\begin{algorithmic}
\FUNCTION{Global Component $\mathcal{A}_{\sf global}$ {\bf (}{\bf Input-} privacy parameters: $(\epsilon,\delta)$ s.t. $\eps \leq 2\log{(1/\delta)}$, total number of iterations: $T$,  bound on $\ltwo{\pomega(Y^*_i)}$: $ L$, failure probability: $\beta$, number of users: $m$, number of items: $n${\bf )}}
\STATE $\sigma\leftarrow L^2 \sqrt{64 \cdot T\log(1/\delta)}/\epsilon, \mathbf{{\widehat v}}\leftarrow \{0\}^{n}$, ${\widehat \lambda}\leftarrow 0$
\FOR{$t\in [T]$}
\STATE $\matT{W}{t}\leftarrow \{0\}^{n\times n}$, ${\widehat \lambda'}\leftarrow  {\widehat \lambda} + \sqrt{\sigma\log(n/\beta)}n^{1/4}$ 
\STATE \textbf{for} $i\in[m]$ \textbf{do} {\small \mbox{$\matT{W}{t}\leftarrow \matT{W}{t}+ \mathcal{A}_{\sf local}(i,\mathbf{\widehat{v}},{\widehat \lambda'},T,t, L)$}}
\STATE $\matT{\widehat{W}}{t}\leftarrow \matT{W}{t}+ \matT{N}{t}$,  where {$\matT{N}{t}\in\re^{n\times n}$} is a matrix with i.i.d. entries from {$\mathcal{N}(0,\sigma^2)$}
\STATE $(\mathbf{\widehat{v}}, {\widehat \lambda}^2)\leftarrow$ Top eigenvector and eigenvalue of $\matT{\widehat{W}}{t}$ \label{step8fw}
\ENDFOR
\ENDFUNCTION


\FUNCTION{Local Update $\mathcal{A}_{\sf local}$ {\bf (Input-} user number: $i$, top right singular vector: $\mathbf{\widehat{v}}$, top singular value: ${\widehat\lambda}'$, total number of iterations: $T$, current iteration: $t$,  bound on $\ltwo{\pomega(Y^*_i)}$:  $L$, private true matrix row: $\pomega(Y^*_i)${\bf )}}
\STATE $\matT{Y_i}{0}\leftarrow \{0\}^n$, $\matT{A_i}{t-1}\leftarrow \pomega(\matT{Y_i}{t-1}-Y_i^*)$
\STATE ${\widehat{u}}_i\leftarrow (\matT{A_i}{t-1}\cdot{\mathbf{\widehat{v}}})/{\widehat \lambda}'$
\STATE Define  $\Pi_{L,\Omega}\left(M\right)_{i,j}=\min\left\{\frac{L}{\ltwo{\pomega\left(M_i\right)}},1\right\} \cdot M_{i,j}$ \label{step:a}
\STATE $\matT{Y_i}{t}\leftarrow \Pi_{L,\Omega}\left(\left(1-\frac{1}{T}\right)\matT{Y_i}{t-1}-\frac{k}{T}{\widehat{u}}_i(\mathbf{\widehat{v}})^T\right)$
\STATE $\matT{A_i}{t}\leftarrow\pomega\left(\matT{Y_i}{t}-Y^*_i\right)$
\STATE \textbf{if} $t=T$, Output $\matT{Y_i}{T}$ as prediction to user $i$ and {\bf stop} 
\STATE \textbf{else} Return $\matT{A_i}{t}^\top \matT{A_i}{t}$ to $\mathcal{A}_{\sf global}$
\ENDFUNCTION
\end{algorithmic}
\end{algorithm}

\emph{Noisy rank-one update}:  Observe that $\mathbf{v}$ and  $\lambda$, the statistics computed in each iteration of $\mathcal{A}_{\sf global}$, are aggregate statistics that use information from all rows of $Y^*$. This ensures that they are noise tolerant. Hence, adding sufficient noise can ensure standard DP (Definition \ref{def:diffPrivacy}) for $\mathcal{A}_{\sf global}$. \footnote{The second term in computing ${\widehat \lambda'}$ in Algorithm~\ref{Algo:PrivFW} is due to a bound on the spectral norm of the Gaussian noise matrix. We use this bound to control the error introduced in the computation of $\hat\lambda$.}
	Since the final objective is to satisfy Joint DP (Definition \ref{def:jDiffPrivacy}), the local component $\mathcal{A}_{\sf local}$ can compute the update for each user (corresponding to \eqref{eq:abcdas}) without adding any noise.

\emph{Controlling norm via projection}: In order to control the amount of noise needed to ensure DP, any individual data entry (here, any row of $Y^*$) should have a bounded effect on the aggregate statistic computed by $\mathcal{A}_{\sf global}$. However, each intermediate computation $\matT{Y_i}{t}$ in \eqref{eq:abcdas} can have high $\ell_2$-norm even if $\ltwo{\pomega(Y^*_i)}\leq L$. We address this by applying a projection operator $\Pi_{L,\Omega}$  (defined below) to $\matT{Y_i}{t}$, and compute the local update as { $\Pi_{L,\Omega}\left(\matT{Y_i}{t}\right)$} in place of \eqref{eq:abcdas}. $\Pi_{L,\Omega}$ is defined as follows: For any matrix $M$, $\Pi_{L,\Omega}$ ensures that any row of the ``zeroed out'' matrix $\pomega{(M)}$ does not have $\ell_2$-norm higher than $L$. Formally, $\Pi_{L,\Omega}\left(M\right)_{i,j}=\min\left\{\frac{L}{\ltwo{\pomega\left(M_i\right)}},1\right\} \cdot M_{i,j}$ for all entries $(i,j)$ of $M$. In our analysis, we show that this projection operation does not increase the error.

\subsection{Privacy and utility analysis}
\label{sec:privUtil}

\begin{theorem}
	Algorithm~\ref{Algo:PrivFW} satisfies $(\epsilon,\delta)$-joint DP. 
	\label{thm:priv}
\end{theorem}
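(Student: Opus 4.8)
The plan is to reduce the joint-DP statement to a \emph{standard} DP statement about a single public object, and then apply the standard billboard argument underlying joint DP \cite{kearns2014mechanism}. The only information that flows between different users is the sequence of statistics produced by $\mathcal{A}_{\sf global}$; write $\mathcal{M}(D):=\{\matT{\widehat W}{t}\}_{t\in[T]}$ for this public signal (the pairs $(\widehat{\mathbf v}^{(t)},\widehat\lambda^{(t)})$ are obtained from it by post-processing). Crucially, each user $i$'s released row $\matT{Y_i}{T}$ is produced by $\mathcal{A}_{\sf local}$ from \emph{only} her own data $\pomega(Y^*_i)$ and this signal, by unrolling the local update. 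Hence, if I show that $D\mapsto\mathcal{M}(D)$ is $(\epsilon,\delta)$-standard DP under replacement of a single user's row, then for a fixed $D_{-i}$ the tuple $\mathcal{A}_{-i}(D)=(\matT{Y_j}{T})_{j\ne i}$ depends on $d_i$ only through $\mathcal{M}(D)$, and post-processing yields $(\epsilon,\delta)$-joint DP (Definition~\ref{defn:jc}). This is exactly why joint, rather than standard, DP is the right target: user $i$'s own output $\matT{Y_i}{T}$ may depend heavily on $d_i$.

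To prove that $\mathcal{M}$ is standard DP I would argue by adaptive composition over the $T$ iterations. Fix neighboring databases $D,D'$ differing only in user $i$'s row, and condition on a common realization of the history $\matT{\widehat W}{1},\dots,\matT{\widehat W}{t-1}$. Given this history, every user $j\ne i$ computes an \emph{identical} iterate $\matT{Y_j}{t}$ in both worlds, since $\matT{Y_j}{t}$ is a deterministic function of the fixed data $d_j$ and the fixed past signal (unroll the recursion back to $\matT{Y_j}{0}=\{0\}^n$). Therefore the two realizations of $\matT{W}{t}=\sum_j\matT{A_j}{t}^\top\matT{A_j}{t}$ differ only in the single rank-one summand contributed by user $i$.

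It remains to bound the per-step sensitivity and compose. The projection $\Pi_{L,\Omega}$ enforces $\ltwo{\pomega(\matT{Y_i}{t})}\le L$, and $\ltwo{\pomega(Y^*_i)}\le L$ by assumption, so $\ltwo{\matT{A_i}{t}}=\ltwo{\pomega(\matT{Y_i}{t}-Y^*_i)}\le 2L$ and $\frob{\matT{A_i}{t}^\top\matT{A_i}{t}}=\ltwo{\matT{A_i}{t}}^2\le 4L^2$. By the triangle inequality the Frobenius-norm sensitivity of $\matT{W}{t}$ is $O(L^2)$, so perturbing it by the i.i.d.\ Gaussian matrix $\matT{N}{t}$ of standard deviation $\sigma=L^2\sqrt{64\,T\log(1/\delta)}/\epsilon$ is a Gaussian mechanism calibrated to this sensitivity. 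Composing the $T$ Gaussian steps in the regime $\epsilon\le 2\log(1/\delta)$ (exactly where the advanced-composition/zCDP conversion gives a clean bound) shows that $\mathcal{M}$ is $(\epsilon,\delta)$-DP, completing the reduction.

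The main obstacle is the adaptive-composition argument of the second paragraph: one must carefully justify that conditioning on the public history \emph{decouples} the other users' contributions, so that only user $i$'s rank-one term varies in each round and the per-round sensitivity stays $O(L^2)$ rather than accumulating through the data-dependent iterates $\matT{Y_j}{t}$. The projection $\Pi_{L,\Omega}$ is precisely what keeps this norm bound uniform across all $T$ rounds, so verifying its role in the sensitivity calculation is the key technical point.
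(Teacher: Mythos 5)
Your proposal is correct and follows essentially the same route as the paper's proof: reduce joint DP to standard DP of the global signal via the billboard/post-processing argument, bound the per-iteration Frobenius sensitivity of the covariance at $O(L^2)$ using the projection $\Pi_{L,\Omega}$, and compose the $T$ Gaussian mechanisms through zCDP under the condition $\eps \leq 2\log(1/\delta)$. Your second paragraph merely spells out explicitly the conditioning-on-history step that the paper leaves implicit in its sensitivity claim, which is a welcome clarification rather than a deviation.
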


\ifnum\supp=0
We defer the proof to the supplementary material.
\else
For a proof of Theorem \ref{thm:priv}, see Appendix \ref{app:privFWpriv}.
\fi
The proof uses standard DP properties of Gaussian noise addition from \cite{bun2016concentrated}. The requirement $\eps \leq 2\log{(1/\delta)}$ in the input of Algorithm~\ref{Algo:PrivFW} is due to a reduction of a Concentrated DP guarantee to a standard DP guarantee. We now show that the empirical risk of our algorithm is close to the optimal as long as the number of users $m$ is ``large''.

\begin{theorem}[Excess empirical risk guarantee]
	Let $Y^*$ be a matrix with {$\nuc{Y^*}\leq k$}, and {$\max\limits_{i \in [m]} \ltwo{\pomega(Y^*)_i}\leq L$.} Let $\matT{\Ypriv}{T}$ be a matrix, with its rows  being $\matT{\Ypriv_i}{T}$ for all $i\in[m]$, computed by function $\mathcal{A}_{\sf local}$  in Algorithm \ref{Algo:PrivFW} after $T$ iterations. If $\eps \leq 2\log{\left(\frac{1}{\delta}\right)}$, then with probability at least 2/3 over the outcomes of Algorithm \ref{Algo:PrivFW}, the following is true:
	{\small\begin{align*}
	 \widehat{F}\left(\matT{\Ypriv}{T}\right)
	 =O\left(\frac{k^2}{|\Omega|T}+\frac{kT^{1/4}L\sqrt{n^{1/2}\log^{1/2}(1/\delta)\log n}}{|\Omega|\sqrt{\epsilon}}\right). 
	\end{align*}}
	Furthermore, if  $T=\tilde O\left (\frac{k^{4/5}\epsilon^{2/5}}{n^{1/5}L^{4/5}}\right)$, then $\widehat{F}\left(\matT{\Ypriv}{T}\right)  =\tilde O\left(\frac{k^{6/5}n^{1/5}L^{4/5}}{|\Omega|\epsilon^{2/5}}\right)$ after hiding poly-logarithmic terms.
	\label{thm:utilRransferLearning}
\end{theorem}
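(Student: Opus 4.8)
The plan is to view Algorithm~\ref{Algo:PrivFW} as Frank--Wolfe run on the convex objective $\widehat F(Y)=\frac{1}{2|\Omega|}\|\pomega(Y-Y^*)\|_F^2$ over the feasible set $\{Y:\nuc{Y}\le k\}$, but with an \emph{approximate} linear-minimization oracle whose inaccuracy is caused solely by the privacy noise $\matT{N}{t}$. First I would record the noiseless ingredients. Since $\widehat F$ is a convex quadratic with Hessian $\frac{1}{|\Omega|}\pomega$ and the feasible set has Frobenius diameter at most $2k$, its curvature constant satisfies $C_{\widehat F}=\sup\frac{1}{|\Omega|}\|\pomega(S-Y)\|_F^2\le \frac{4k^2}{|\Omega|}$, which is the source of the first term. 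Because $\nuc{Y^*}\le k$, the matrix $Y^*$ is feasible and attains $\widehat F(Y^*)=0$, so the excess risk equals $\widehat F(\matT{\Ypriv}{T})$ itself, and the initial gap is $\widehat F(\matT{\Ypriv}{0})=\widehat F(0)\le \frac{1}{2|\Omega|}\|Y^*\|_F^2\le \frac{k^2}{2|\Omega|}$ using $\|Y^*\|_F\le\nuc{Y^*}$. The exact FW direction is $G^\star=-k\,uv^\top$ with $(u,v,\lambda)$ the top singular triple of $\matT{A}{t-1}$, whereas the algorithm moves along $\widehat G=-k\,\widehat u\,\widehat v^\top$, where $\widehat u=\matT{A}{t-1}\widehat v/\widehat\lambda'$ and $\widehat v$ is the top eigenvector of the noisy Gram matrix $\matT{\widehat W}{t}=\matT{A}{t-1}^\top\matT{A}{t-1}+\matT{N}{t}$.

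Next I would set up the perturbed descent recursion. With step size $\gamma_t$, smoothness and convexity give, for $h_t:=\widehat F(\matT{\Ypriv}{t})$,
\begin{align*}
h_t\le (1-\gamma_t)h_{t-1}+\tfrac{\gamma_t^2}{2}C_{\widehat F}+\gamma_t\,\mathrm{err}_t,\qquad \mathrm{err}_t:=\ip{\widehat G-G^\star}{\nabla\widehat F(\matT{\Ypriv}{t-1})}\ge 0.
\end{align*}
Two points need justification here. First, the per-user update applies $\Pi_{L,\Omega}$; I would show this only helps, since restricted to $\Omega$ it is exactly the Euclidean projection of each row onto the radius-$L$ ball, a convex set containing $\pomega(Y^*_i)$ by assumption, so non-expansiveness gives $\widehat F(\Pi_{L,\Omega}(\cdot))\le\widehat F(\cdot)$; moreover $\Pi_{L,\Omega}$ rescales rows by factors in $[0,1]$, hence does not increase the nuclear norm and keeps the iterate feasible. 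Second, a direct computation gives $\ip{G^\star}{\nabla\widehat F}=-k\lambda/|\Omega|$ and $\ip{\widehat G}{\nabla\widehat F}=-k\|\matT{A}{t-1}\widehat v\|^2/(|\Omega|\widehat\lambda')$, so $\mathrm{err}_t=\frac{k}{|\Omega|}\big(\lambda-\|\matT{A}{t-1}\widehat v\|^2/\widehat\lambda'\big)$. Telescoping this standard recursion with the FW step sizes, and using $h_0=O(k^2/|\Omega|)$, then yields $h_T=O\big(C_{\widehat F}/T+\max_{t}\mathrm{err}_t\big)=O\big(\frac{k^2}{|\Omega|T}+\max_t\mathrm{err}_t\big)$.

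The heart of the argument, and the main obstacle, is bounding $\mathrm{err}_t$ in terms of the noise. I would first invoke the standard spectral-norm concentration for Gaussian matrices to get $\|\matT{N}{t}\|_{\mathrm{op}}=O(\sigma\sqrt n)$ simultaneously for all $t\in[T]$ after a union bound (failure probability absorbed into $\beta$). Because $\widehat v$ maximizes the noisy Rayleigh quotient, $\widehat v^\top\matT{\widehat W}{t}\widehat v\ge v^\top\matT{\widehat W}{t}v\ge\lambda^2-\|\matT{N}{t}\|_{\mathrm{op}}$, which after subtracting the noise term gives $\|\matT{A}{t-1}\widehat v\|^2\ge\lambda^2-2\|\matT{N}{t}\|_{\mathrm{op}}$. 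The inflated value $\widehat\lambda'=\widehat\lambda+\sqrt{\sigma\log(n/\beta)}\,n^{1/4}$ is chosen precisely so that $\widehat\lambda'\ge\lambda$ with high probability (this is the role of the footnoted spectral-norm correction, as $\sqrt{\sigma}\,n^{1/4}\approx\sqrt{\|\matT{N}{t}\|_{\mathrm{op}}}$ dominates the perturbation of $\widehat\lambda$ away from $\lambda$), while overshooting $\lambda$ by at most $O(\sqrt{\sigma}\,n^{1/4}\sqrt{\log n})$. Substituting the two bounds into $\mathrm{err}_t=\frac{k}{|\Omega|}(\lambda-\|\matT{A}{t-1}\widehat v\|^2/\widehat\lambda')$ and checking the two regimes according to whether $\lambda$ exceeds $\sqrt{\|\matT{N}{t}\|_{\mathrm{op}}}$ yields $\mathrm{err}_t=O\big(\frac{k}{|\Omega|}\sqrt{\sigma}\,n^{1/4}\sqrt{\log n}\big)$.

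Finally I would assemble the bound. Plugging $\sigma=L^2\sqrt{64\,T\log(1/\delta)}/\epsilon$, so that $\sqrt\sigma=\Theta\big(L\,T^{1/4}\log^{1/4}(1/\delta)/\sqrt\epsilon\big)$, turns $\max_t\mathrm{err}_t$ into $O\big(\frac{k\,T^{1/4}L\,n^{1/4}\log^{1/4}(1/\delta)\sqrt{\log n}}{|\Omega|\sqrt\epsilon}\big)$, which is exactly the claimed second term $\frac{kT^{1/4}L\sqrt{n^{1/2}\log^{1/2}(1/\delta)\log n}}{|\Omega|\sqrt\epsilon}$; adding the curvature term $\frac{k^2}{|\Omega|T}$ gives the first displayed bound. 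To obtain the optimized rate I would balance the two terms: the first decreases in $T$ while the second grows like $T^{1/4}$, and equating them gives $T=\tilde O\big(k^{4/5}\epsilon^{2/5}/(n^{1/5}L^{4/5})\big)$, at which point both equal $\tilde O\big(k^{6/5}n^{1/5}L^{4/5}/(|\Omega|\epsilon^{2/5})\big)$, as stated. The probability-$2/3$ guarantee follows by taking $\beta$ to be a small constant in the high-probability bounds on $\|\matT{N}{t}\|_{\mathrm{op}}$.
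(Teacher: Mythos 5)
Your proposal follows essentially the same route as the paper's own proof: it casts Algorithm~\ref{Algo:PrivFW} as Frank--Wolfe with an approximate linear-minimization oracle, shows the projection $\Pi_{L,\Omega}$ cannot increase the loss by non-expansiveness of row-wise $\ell_2$ projection (the paper's Lemma~\ref{lem:kljah12}), bounds the oracle slack by $O\bigl(\tfrac{k}{|\Omega|}\sqrt{\sigma\sqrt{n}\log(n/\beta)}\bigr)$ via the noisy Rayleigh quotient and Weyl's inequality (the paper's Lemmas~\ref{lem:singularBound} and~\ref{lem:lkjkh}), and then substitutes $\sigma$ and balances $T$ exactly as in Appendix~\ref{app:privFWutil}. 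The only differences are cosmetic: you re-derive the perturbed Frank--Wolfe recursion and the noisy-eigenvector energy bound directly, where the paper instead invokes its approximate-FW guarantee (Theorem~\ref{thm:utilFW}) and a lemma imported from Dwork et al.\ (Lemma~\ref{lem:DTTZ14}).
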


\ifnum\supp=0
We defer the proof to the supplementary material.
\else
See Appendix~\ref{app:privFW} for a proof of Theorem \ref{thm:utilRransferLearning}.
\fi
At a high-level, our proof combines the noisy eigenvector estimation error for Algorithm~\ref{Algo:PrivFW} with a noisy-gradient analysis of the FW algorithm. Also, note that the first term in the bound corresponds to the standard FW convergence error, while the second term can be attributed to the noise added for DP which directly depends on $T$. We also compute the optimal number of iterations required to minimize the empirical risk. Finally,  the rank of $\matT{\Ypriv}{T}$ is at most $T$, but its nuclear-norm is bounded by $k$. As a result, $\matT{\Ypriv}{T}$ has low {\em generalization error} (see Section \ref{sec:genError}).

\begin{remark}
	\label{rem:fw1}
	We further illustrate our empirical risk bound by considering a simple setting: let $Y^*$ be a rank-one matrix with $Y_{ij}^*\in [-1,1]$ and $|\Omega|=m\sqrt{n}$. Then $k=O(\sqrt{mn})$, and $L=O(n^{1/4})$, implying an error of $\widetilde{O}\left(\sqrt{n}m^{-2/5}\right)$ hiding the privacy parameter $\eps$; in contrast, a trivial solution like $Y=0$ leads to $O(1)$ error. Naturally, the error increases with $n$ as there is more information to be protected. However, it decreases with a larger number of users $m$ as the presence/absence of a user has lesser effect on the solution with increasing $m$. We leave further investigation into the dependency of the error on $m$ for future work.
\end{remark}

\begin{remark}
	Our analysis does not require an upper bound on the nuclear norm of $Y^*$ (as stated in Theorem~\ref{thm:utilRransferLearning}); we would instead incur an additional error of $\min\limits_{\nuc{Y}\leq k}\frac{1}{|\Omega|}\lfrob{\pomega\left(Y^*-Y\right)}^2$. Moreover, consider a similar scenario as in Remark~\ref{rem:fw1}, but $|\Omega|=mn$, i.e., all the entries of $Y^*$ are revealed. In such a case,  $L=O(\sqrt{n})$, and the problem reduces to that of standard  low-rank matrix approximation of $Y^*$. Note that our result here leads to an error bound of $\widetilde{O}\left(n^{1/5}m^{-2/5}\right)$, while the state-of-the-art result by \cite{HR13} leads to an error bound of $O(1)$ due to being in the much stricter standard DP model. 
	\label{rem:12}
\end{remark}

\subsubsection{Generalization error guarantee}
\label{sec:genError} 
We now present a generalization error (defined in Equation~\ref{eqn: gen_error_def}) bound which shows that our approach provides accurate prediction  over {\em unknown} entries. 
For obtaining our bound, we use Theorem 1 from \cite{srebro2005rank} (provided in 
\ifnum\supp=0
the supplementary material
\else
Appendix~\ref{app:res}
\fi 
for reference).
Also, the output of Private FW (Algorithm \ref{Algo:PrivFW}) has rank at most $T$, where $T$ is the number of iterations. Thus, replacing $T$ from Theorem \ref{thm:utilRransferLearning}, we get the following:
\begin{corollary}[Generalization Error]
 Let $\nuc{Y^*}\leq k$ for a hidden matrix $Y^*$, and $\ltwo{\pomega(Y_i^*)} \leq L$ for every row $i$ of $Y^*$. If we choose the number of rounds in Algorithm \ref{Algo:PrivFW} to be $O\left(\frac{k^{4/3}}{\left(|\Omega| (m+n)\right)^{1/3}}\right)$, the data samples in $\Omega$ are drawn u.a.r. from $[m]\times[n]$, and $\eps \leq 2\log{\left(\frac{1}{\delta}\right)}$, then with probability at least 2/3 over the outcomes of the algorithm and choosing $\Omega$, the following is true for the final completed matrix $Y$:
	{\small
	$$F(Y)=\tilde O\left(\frac{k^{4/3} L n^{1/4}}{\sqrt{\epsilon|\Omega|^{13/6}(m+n)^{1/6}}}+\left(\frac{k\sqrt{m+n}}{|\Omega|}\right)^{2/3}\right).	$$}The $\tilde O\left(\cdot\right)$ hides poly-logarithmic terms in $m,n,|\Omega|$ and $\delta$. 
	\label{cor:abcfd12}
\end{corollary}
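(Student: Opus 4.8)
The plan is to chain the empirical-risk guarantee of Theorem~\ref{thm:utilRransferLearning} with a uniform-convergence bound for bounded-rank matrices, and then to pick the iteration count $T$ so as to balance the resulting error terms. First I would invoke Theorem~\ref{thm:utilRransferLearning}: since $\nuc{Y^*}\le k$ and $\max_i\ltwo{\pomega(Y^*)_i}\le L$, the output $Y=\matT{\Ypriv}{T}$ satisfies, with probability at least $2/3$ over the algorithm's coins,
\[
\widehat F(Y)=O\!\left(\frac{k^2}{|\Omega|T}+\frac{kT^{1/4}L\sqrt{n^{1/2}\log^{1/2}(1/\delta)\log n}}{|\Omega|\sqrt\epsilon}\right).
\]
The structural fact I would rely on is that the output of Algorithm~\ref{Algo:PrivFW} is a convex combination of $T$ rank-one updates, so it has rank at most $T$ and nuclear norm at most $k$; hence $Y$ lies in a low-complexity hypothesis class, and its entries are controlled by the row projection $\Pi_{L,\Omega}$ together with the boundedness of $Y^*$.

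Next I would pass from the empirical risk $\widehat F$ (error on the observed set $\Omega$) to the generalization error $F$ (expected squared error on a uniformly random entry). Because the indices in $\Omega$ are drawn u.a.r.\ from $[m]\times[n]$, $\widehat F(Y)$ is (half) the empirical mean of the per-entry squared loss and $F(Y)$ its population mean, so I would apply Theorem~1 of \cite{srebro2005rank} to the class of rank-$\le T$ matrices with bounded entries. That theorem yields a uniform bound of the form
\[
F(Y)\le \widehat F(Y)+\tilde O\!\left(\sqrt{\frac{T(m+n)}{|\Omega|}}\right),
\]
valid with high probability over the draw of $\Omega$. A union bound over this event and the success event of Theorem~\ref{thm:utilRransferLearning}, after adjusting the constant failure probabilities, keeps the overall success probability at least $2/3$.

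The final step is to optimize over $T$. The tension is that the Frank-Wolfe convergence term $k^2/(|\Omega|T)$ in $\widehat F(Y)$ decreases in $T$, while the generalization penalty $\sqrt{T(m+n)/|\Omega|}$ grows in $T$; the privacy-noise term enters only mildly, through $T^{1/4}$. Equating the decreasing and increasing terms, $k^2/(|\Omega|T)=\sqrt{T(m+n)/|\Omega|}$, gives precisely the prescribed $T=O\!\big(k^{4/3}/(|\Omega|(m+n))^{1/3}\big)$, at which both terms equal $\big(k\sqrt{m+n}/|\Omega|\big)^{2/3}$, the second summand of the claim. Substituting the same $T$ into the privacy-noise term (using $T^{1/4}=k^{1/3}/(|\Omega|(m+n))^{1/12}$ and $\sqrt{n^{1/2}\cdots}=n^{1/4}\cdot\mathrm{polylog}$) reproduces the first summand $k^{4/3}Ln^{1/4}/\sqrt{\epsilon|\Omega|^{13/6}(m+n)^{1/6}}$. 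Collecting the three contributions and folding poly-logarithmic factors into $\tilde O(\cdot)$ delivers the stated generalization error.

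I expect the main obstacle to be the clean invocation of the generalization bound: verifying that $Y$ genuinely belongs to the rank-$T$, bounded-entry class for which \cite{srebro2005rank} applies, and tracking how the entry bound (equivalently the row $\ell_2$ bound $L$ and the scale of $Y^*$) enters the complexity term \emph{without} introducing extra factors that would spoil the matching of exponents above. Once the two competing error terms are correctly identified, the $T$-optimization itself is routine arithmetic.
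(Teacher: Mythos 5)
Your proposal is correct and follows essentially the same route as the paper: invoke Theorem~\ref{thm:utilRransferLearning} for the empirical risk, use the fact that the output of Algorithm~\ref{Algo:PrivFW} has rank at most $T$ so that Theorem~1 of \cite{srebro2005rank} gives the $\tilde O\bigl(\sqrt{T(m+n)/|\Omega|}\bigr)$ transfer from $\widehat F$ to $F$, and then choose $T$ to balance $k^2/(|\Omega|T)$ against that term, which yields the stated $T$ and both summands of the bound. Your arithmetic for the balancing and for the substitution into the privacy-noise term matches the paper exactly, and you in fact spell out the $T$-optimization in more detail than the paper does.
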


\begin{remark}\label{rem:ge1}
	We further illustrate our bound using a setting similar to the one considered in Remark~\ref{rem:fw1}. Let $Y^*$ be a rank-one matrix with $Y^*_{ij}\in [-1,1]$ for all $i,j$; let $|\Omega|\geq m\sqrt{n}\cdot{\sf polylog}(n)$, i.e., the fraction of movies rated by each user is arbitrarily small for larger $n$. For this setting, our generalization error is $o(1)$ for $m=\omega(n^{5/4})$. This is slightly higher than the bound in the non-private setting by  \cite{shalev2011large}, where $m=\omega(n)$ is sufficient to get generalization error $o(1)$. Also, as the first term in the error bound pertains to DP, it decreases with a larger number of users $m$, and increases with $n$ as it has to preserve privacy of a larger number of items. In contrast, the second term is the matrix completion error decreases with $n$. This is intuitive, as a larger number of movies enables more sharing of information between users, thus allowing a better estimation of preferences $Y^*$. However, just increasing $m$ may not always lead to a more accurate solution (for example, consider the case of $n=1$). 
\end{remark}
\begin{remark}
	The guarantee in Corollary \ref{cor:abcfd12} is for uniformly random $\Omega$, but using the results of \cite{shamir2011collaborative}, it is straightforward to extend our results to any i.i.d. distribution over $\Omega$. Moreover, we can extend our results to handle strongly convex and smooth loss functions instead of the squared loss considered in this paper. 
\end{remark}

\subsection{Efficient PCA via Oja's Algorithm}
Algorithm~\ref{Algo:PrivFW} requires computing the top eigenvector of $\matT{\widehat{W}}{t}={\matT{{W}}{t}}+\matT{N}{t}$, where $\matT{{W}}{t}=\sum_i \left(\matT{A_i}{t}\right)^\top \matT{A_i}{t}$ and $\matT{N}{t}$ is a random noise matrix. However, this can be a bottleneck for computation as $\matT{N}{t}$ itself is a dense $n\times n$ matrix, implying a  space complexity of $\Omega(n^2+m k)$, where $k$ is the maximum number of ratings provided by a user. Similarly, standard eigenvector computation algorithms will require $O(m k^2+n^2)$ time (ignoring factors relating to rate of convergence), which can be prohibitive for practical recommendation systems with large $n$.  We would like to stress that this issue plagues even standard DP PCA algorithms \cite{DTTZ13}, which have quadratic space-time complexity in the number of dimensions. 

We tackle this by using a stochastic algorithm for the top eigenvector computation that significantly reduces both space and time complexity while preserving privacy. In particular, we use Oja's algorithm \cite{JJKNS16}, which computes top eigenvectors of a matrix with a stochastic access to the matrix itself. That is, if we want to compute the top eigenvector of $\matT{{W}}{t}$, we can use the following updates: 
\begin{equation}\label{eq:oja}\mathbf{\widehat{v}_\tau}=(I+\eta X_\tau)\mathbf{\widehat{v}_{\tau-1}}, \qquad \mathbf{\widehat{v}_\tau}=\mathbf{\widehat{v}_\tau}/\|\mathbf{\widehat{v}_\tau}\|_2\end{equation}
where $\E[X_\tau]=\matT{{W}}{t}$. For example, we can update $\mathbf{\widehat{v}_\tau}$ using $X_\tau=\matT{{W}}{t}+{N^{(t)}_\tau}$ where each entry of ${N^{(t)}_\tau}$ is sampled i.i.d. from a Gaussian distribution calibrated to ensure DP. 
Even this algorithm in its current form does not decrease the space or time complexity as we need to generate a dense matrix $\matT{N_\tau}{t}$ in each iteration. However, by observing that $\matT{N_\tau}{t} v=g_\tau\sim \mathcal{N}(0, \sigma^2 \mathbf{1}^n)$ where $v$ is independent of $\matT{N_\tau}{t}$, we can now replace the generation of $\matT{N_\tau}{t}$ by the generation of a vector $g_\tau$, thus reducing both the space and time complexity of our algorithm. The computation of each update is significantly {\em cheaper} as long as $m k\ll n^2$, which is the case for practical recommendation systems as $k$ tends to be fairly small there (typically on the order of $\sqrt{n}$). 

\begin{algorithm}[tb]
	\caption{Private Oja's algorithm}
	\label{Alg:Eigs}
	\begin{algorithmic}
		\STATE {\bfseries Input:} An $m \times n$ matrix $A$ s.t. each row $\ltwo{A_i}\leq L$, privacy parameters: $(\epsilon,\delta)$ s.t. $\epsilon\leq 2\log(1/\delta)$, total number of iterations: $\Gamma$ 
		\STATE $\sigma \leftarrow L^2\sqrt{256\cdot \Gamma\log(2/\delta)}/\epsilon, \mathbf{\widehat{v}_0} \sim \mathcal{N}(0,\sigma^2 I)$
		\FOR{$\tau\in[\Gamma]$}
			\STATE $\eta=\frac{1}{\Gamma \sigma \sqrt{n}}, g_\tau\sim \mathcal{N}(0,\sigma^2\mathbf{1}^n)$
			\STATE $\mathbf{\widehat{v}_\tau}\leftarrow \mathbf{\widehat{v}_{\tau-1}} + \eta\left(A^T A\mathbf{\widehat{v}_{\tau-1}}+g_\tau\right)$,   $\mathbf{\widehat{v}_\tau}\leftarrow \mathbf{\widehat{v}_\tau}/\|\mathbf{\widehat{v}_\tau}\|_2$ 
		\ENDFOR
		\STATE Return $\mathbf{\widehat{v}_\Gamma}$, $\left(\widehat{\lambda}_\Gamma^2 \leftarrow ||A \cdot \mathbf{\widehat{v}_\Gamma}||_2^2 + \mathcal{N}(0,\sigma^2)\right)$
	\end{algorithmic}
\end{algorithm}


Algorithm~\ref{Alg:Eigs} provides a pseudocode of the eigenvector computation method. The computation of the approximate eigenvector $\mathbf{\widehat{v}_\Gamma}$ and the eigenvalue $\widehat{\lambda}_\Gamma^2$ in it is DP (directly follows via the proof of Theorem \ref{thm:priv}). The next natural question is how well can $\mathbf{\widehat{v}_\Gamma}$ approximate the behavior of the top eigenvector of the non-private covariance matrix $\matT{{W}}{t}$? To this end, we provide Theorem~\ref{thm:oja} below 
that analyzes Oja's algorithm, and shows that the \emph{Rayleigh quotient} of the approximate eigenvector is close to the top eigenvalue of $\matT{{W}}{t}$. 
In particular, using Theorem~\ref{thm:oja} along with the fact that in our case, $\mathcal{V}=\sigma^2 n$, we have 
$\ltwo{\matT{A}{t}}^2\leq \|\matT{A}{t}\mathbf{\widehat{v}_\Gamma}\|_2^2+ O\left(\sigma \sqrt{n}\log(\eta/\beta)\right)$ with high probability (w.p. $\geq 1-\beta^2)$),
where $\mathbf{\widehat{v}_\Gamma}$ is the output of Algorithm~\ref{Alg:Eigs}, $\Gamma=\Omega\left(\min\left\{\frac{1}{\beta},\frac{\|\matT{A}{t}\|^2}{\sigma\sqrt{n}}\right\}\right)$, and $\eta=\frac{1}{\Gamma\cdot \sigma \sqrt{n}}$.  

Note that the above given bound is exactly the bound required in the proof of \ifnum\supp=0 Theorem \ref{thm:utilRransferLearning}. \else  Theorem~\ref{thm:utilFW} in Appendix~\ref{sec:FW}. \fi
  Hence, computing the top eigenvector privately using Algorithm~\ref{Alg:Eigs} does not change the utility bound of Theorem~\ref{thm:utilRransferLearning}. 

\begin{theorem}[Based on Theorem 3 \cite{zeyuan}]\label{thm:oja}
	Let $X_1$, $X_2, \dots X_\Gamma$ be sampled i.i.d. such that $\E{X_i}=W=A^T A$. Moreover, let {\small$\mathcal{V}=\max\{\|\E{(X_i-W)^T(X_i-W)}\|, \|\E{(X_i-W)(X_i-W)^T}\|\}$}, and $\eta=\frac{1}{\sqrt{\mathcal{V}\Gamma}}$. Then, the $\Gamma$-th iterate of Oja's Algorithm (Update \eqref{eq:oja}) , i.e., $\mathbf{\widehat{v}_\Gamma}$, satisfies (w.p. $\geq 1-1/{\sf poly}(\Gamma)$): $\mathbf{\widehat{v}_\Gamma}^T W \mathbf{\widehat{v}_\Gamma}\geq  \|W\|_2- O\left(\sqrt{\frac{\mathcal{V}}{\Gamma}}+\frac{\|W\|_2}{\Gamma}\right).$
\end{theorem}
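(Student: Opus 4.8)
The plan is to reduce to the gap-free streaming-PCA analysis of \cite{zeyuan} (their Theorem 3), tracking the one simplification that matters here: we only need a bound on the Rayleigh quotient $\mathbf{\widehat{v}_\Gamma}^T W \mathbf{\widehat{v}_\Gamma}$, not on the angle between $\mathbf{\widehat{v}_\Gamma}$ and the true top eigenvector (which would require an eigengap). First I would unroll the normalized update \eqref{eq:oja}: since rescaling never changes direction, $\mathbf{\widehat{v}_\Gamma}$ points along $w_\Gamma := B_\Gamma \mathbf{\widehat{v}_0}$, where $B_\Gamma = \prod_{\tau=1}^{\Gamma}(I+\eta X_\tau)$, so the Rayleigh quotient equals $w_\Gamma^T W w_\Gamma / \|w_\Gamma\|_2^2$. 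Writing $W=\sum_i \lambda_i u_i u_i^T$ with $\lambda_1 = \|W\|_2$, the goal reduces to showing that $w_\Gamma$ concentrates its mass on the eigen-directions whose $\lambda_i$ are close to $\lambda_1$.

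The core of the argument is a gap-free spectral decomposition. I would fix a threshold $\rho>0$, split the spectrum into a \emph{top} part $\{i : \lambda_i \ge \lambda_1 - \rho\}$ and a tail, and argue that the tail mass of $w_\Gamma$ is small relative to the top mass. The driving recursion is the second-moment identity $\E[(I+\eta X_\tau)^T M (I+\eta X_\tau)] = (I+\eta W)^T M (I+\eta W) + \eta^2 \E[(X_\tau - W)^T M (X_\tau - W)]$, applied with $M$ a projector onto the top (resp. tail) subspace: the first term produces multiplicative growth governed by the respective eigenvalues, while the second contributes a variance increment whose operator norm is $O(\eta^2 \mathcal{V})$. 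Iterating over $\Gamma$ steps with $\eta = 1/\sqrt{\mathcal{V}\Gamma}$, the top mass grows by a factor $\gtrsim (1+\eta(\lambda_1-\rho))^\Gamma$ relative to the tail, whereas the accumulated variance stays $O(\eta^2 \mathcal{V}\Gamma) = O(1)$. Choosing $\rho \asymp \sqrt{\mathcal{V}/\Gamma}$ balances the bias from the threshold against this variance, which is exactly what produces the $\sqrt{\mathcal{V}/\Gamma}$ term; the residual $\|W\|_2/\Gamma$ absorbs the $O(\eta^2\|W\|_2^2)$ per-step corrections and the cost of the warm start.

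For the initialization I would use that $\mathbf{\widehat{v}_0}$ is an isotropic Gaussian, so $|\langle \mathbf{\widehat{v}_0}, u_1\rangle|^2 \ge 1/\mathrm{poly}(n)$ with probability $1-1/\mathrm{poly}(\Gamma)$; this guarantees a nonnegligible initial signal along the top subspace, and the logarithmic loss it incurs is swallowed by the failure probability and the $O(\cdot)$ in the stated bound.

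The main obstacle I expect is the concentration of the random matrix product $B_\Gamma$. Because the noise enters multiplicatively, bounding each factor separately loses polynomial factors in $\Gamma$ and destroys the rate; the correct route is a single martingale argument on a carefully chosen potential (the ratio of tail-to-top mass, or an equivalent per-step progress supermartingale) so that the fluctuations telescope rather than compound, together with a Bernstein-type tail bound that depends only on $\mathcal{V}$ and not on any eigengap. Setting up this martingale and controlling its deviations is where the real work lies, and is precisely the content inherited from \cite{zeyuan}.
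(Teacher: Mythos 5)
There is an important mismatch of expectations here: the paper contains \emph{no proof} of this statement at all. Theorem~\ref{thm:oja} is imported wholesale from \cite{zeyuan} (hence the label ``Based on Theorem 3''), and neither the body nor any appendix reproduces any part of its argument; the paper uses it strictly as a black box to justify the utility of Algorithm~\ref{Alg:Eigs}. So your proposal cannot agree or disagree with ``the paper's proof''; it can only be judged as a reconstruction of the cited gap-free streaming-PCA analysis.

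As such a reconstruction, your outline is faithful to how that analysis actually goes: the per-step normalizations in Update~\eqref{eq:oja} are immaterial, so $\mathbf{\widehat{v}_\Gamma}$ is the normalized image of $\mathbf{\widehat{v}_0}$ under $\prod_{\tau}(I+\eta X_\tau)$; the Rayleigh-quotient goal is reduced to a gap-free split of the spectrum at $\lambda_1-\rho$; your second-moment identity is correct (the cross terms vanish because $\E[X_\tau]=W$); the variance budget $\eta^2\mathcal{V}\Gamma=O(1)$ under $\eta=1/\sqrt{\mathcal{V}\Gamma}$ is right; and the Gaussian warm start gives the needed initial correlation. Two caveats. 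First, your threshold calibration is slightly off: with $\rho\asymp\sqrt{\mathcal{V}/\Gamma}$ the relative amplification of top over tail modes over $\Gamma$ steps is only $e^{\eta\rho\Gamma}=e^{O(1)}$, a constant, which cannot overcome the $1/\mathrm{poly}(n)$ initial correlation of a random start; one needs $\rho\gtrsim\sqrt{\mathcal{V}/\Gamma}\cdot\log(n/\beta)$, which is exactly why the genuine theorem, and the paper's own invocation of it (where the error appears as $O\left(\sigma\sqrt{n}\log(\eta/\beta)\right)$), carries logarithmic factors that the clean statement suppresses inside $O(\cdot)$. Second, and more substantively, your sketch stops precisely where the difficulty begins: the supermartingale/Bernstein concentration for the random matrix product is named but not executed, and you explicitly defer it to \cite{zeyuan}. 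That deferral is defensible --- it is exactly what the paper itself does --- but it means your text is a proof outline wrapped around a black-box appeal to the reference, not an independent proof; anyone requiring a self-contained argument would still have to carry out that martingale analysis.
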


\emph{Comparison with Private Power Iteration (PPI) method \cite{HR13}:} Private PCA via PPI provides utility guarantees dependent on the gap between the top and the $k$th eigenvalue of the input matrix $A$ for some $k > 1$, whereas private Oja's utility guarantee is gap-independent.
\section{Experimental evaluation}
\label{sec:exptVal}
We now present empirical results for Private FW (Algorithm~\ref{Algo:PrivFW}) on several benchmark datasets, 
and compare its performance to state-of-the-art methods like  \cite{MM09}, and private as well as non-private variant of the Projected Gradient Descent (PGD) method \cite{cai2010singular, BST, DPDL}. In all our experiments, we see that private FW provides accuracy very close to that of the non-private baseline, and almost always significantly outperforms both the private baselines. 


\emph{Datasets}: As we want to preserve privacy of every user, and the output for each user is $n$-dimensional, we can expect the private recommendations to be accurate only when $m\gg n$ (see Theorem~\ref{thm:priv}). Due to this constraint, we conduct experiments on the following datasets: 
1) \emph{Synthetic:} We generate a random rank-one matrix $Y^*=uv^T$ with  unit $\ell_\infty$-norm, $m=500$K, and $n=400$, 2) \emph{Jester:} This dataset contains $n=100$ jokes, and $m \approx 73$K users, 3) \emph{MovieLens10M (Top 400):}  We pick the $n=400$ most rated movies from the Movielens10M dataset, resulting in $m\approx 70$K users, 
4) \emph{Netflix (Top 400):} We pick the $n=400$ most rated movies from the Netflix prize dataset, resulting in $m\approx474$K users, 
and 5) \emph{Yahoo! Music (Top 400):}  We pick the $n=400$ most rated songs from the Yahoo! music dataset, resulting in $m \approx 995$K users.
\footnote{\label{rem:abbd1}For $n=900$ with all the considered datasets (except Jester), we see that private PGD takes too long to complete; we present an evaluation for the other algorithms in \ifnum\supp=0 the supplementary material. \else Appendix \ref{app:exptVal}. \fi} 
We rescale the ratings to be from 0 to 5 for Jester and Yahoo! Music.

\emph{Procedure: } For all datasets, we randomly sample $1\%$ of the given ratings for measuring the test error. 
 For experiments with privacy, for all datasets except Jester, we randomly select at most $\xi = 80$ ratings per user to get $\pomega(Y^*)$. We vary the privacy parameter $\eps\in [0.1,5]$ \footnote{The requirement in Algorithm \ref{Algo:PrivFW} that $\eps \leq 2\log{(1/\delta)}$ is satisfied by all the values of $\eps$ considered for the experiments.}, but keep $\delta=10^{-6}$, thus ensuring that $\delta < \frac{1}{m}$ for all datasets. Moreover, we report results averaged over $10$ independent runs. 

Note that the privacy guarantee is user-level, which 
 effectively translates to an entry-level guarantee of $\eps_{entry} = \frac{\eps_{user}}{\xi}$, i.e., $\eps_{entry} \in [ 0.00125, 0.0625]$ as $\eps_{user} \in [0.1,5]$. 


For the experiments with private Frank-Wolfe (Algorithm~\ref{Algo:PrivFW}), we normalize the data as $\hat{r}_{i,j} = r_{i,j} - u_i$ for all $i \in [m], j \in [n]$, where $r_{i,j}$ is user $i$'s rating for item $j$, and $u_i$ is the average rating of user $i$. Note that each user can safely perform such a normalization at her end without incurring any privacy cost. Regarding the parameter choices for private FW, we cross-validate over the nuclear norm bound $k$, and the number of iterations $T$ for each dataset. For $k$, we set it to the actual nuclear norm for the synthetic dataset, and choose from  $\{20000, 25000\}$ for Jester, $\{ 120000, 130000\}$ for Netflix,  $\{30000, 40000\}$ for MovieLens10M, and $\{130000, 150000\}$ for the Yahoo! Music dataset.  We choose $T$ from various values in $[5,50]$. Consequently, the rank of the prediction matrix for all the private FW experiments is at most 50. For faster training, we calibrate the scale of the noise in every iteration according to the number of iterations that the algorithm has completed, while still ensuring the overall DP guarantee.

\emph{Non-private baseline:} For the non-private baseline, we normalize the training data for the experiments with non-private Frank-Wolfe by removing the per-user and per-movie averages (as in \cite{jaggi2010simple}), and we run non-private FW for 400 iterations. For non-private PGD, we tune the step size schedule. We find that non-private FW and non-private PGD converge to the same accuracy after tuning, and hence, we use this as our baseline.   

\emph{Private baselines:} To the best of our knowledge, only \cite{MM09} and \cite{liu2015fast} address the user-level DP matrix completion problem. While we present an empirical evaluation of the `SVD after cleansing method' from the former, we refrain from comparing to the latter \footnote{The exact privacy parameters $(\epsilon$ and $\delta)$ for the  Stochastic Gradient Langevin Dynamics based algorithm in \cite{liu2015fast} (correspondigly, in \cite{WFS15}) are unclear. They use a Markov chain based sampling method; to obtain quantifiable $(\eps, \delta)$, the sampled distribution is required to  converge  (non-asymptotically) to a DP preserving distribution in $\ell_1$ distance, for which we are not aware of any analysis.}. We also provide a comparison with private PGD (pseudocode provided in \ifnum\supp=0
 the supplementary material).
\else
 Appendix \ref{app:Pseudo}).
\fi 

For the `SVD after cleansing method' from \cite{MM09}, we set $\delta = 10^{-6}$, and select $\epsilon$ appropriately to ensure a fair comparison. We normalize the data by removing the private versions of the global average rating and the per-movie averages. We tune the shrinking parameters $\beta_m$ and $\beta_p$ from various values in $[5,15]$, and $\beta$ from $[5,25]$. For private PGD,  we tune $T$ from various values in $[5,50]$, and the step size schedule from $\left \{t^{-1/2}, t^{-1}, 0.05, 0.1, 0.2, 0.5\right \}$ for $t \in [T]$. We set the nuclear norm constraint $k$ equal to the nuclear norm of the hidden matrix, and for faster training, we calibrate the scale of the noise as in our private FW experiments.

\begin{figure}[t]
	\centering
	\begin{tabular}{ccc}
		\hspace*{-15pt}
	\begin{minipage}[b]{0.33\textwidth}
		\includegraphics[width=\textwidth]{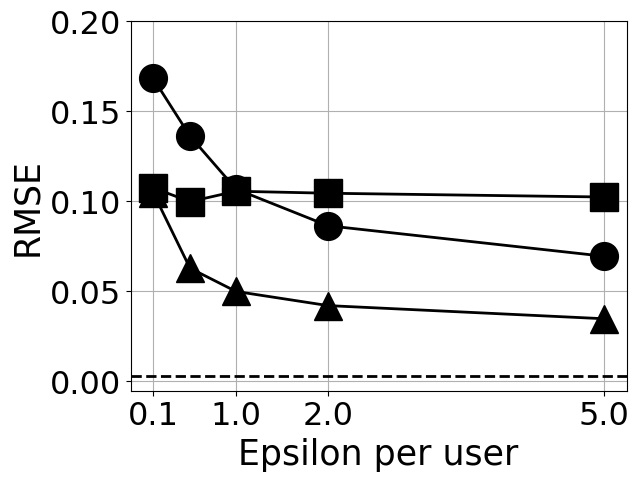}
	\end{minipage}
	&		\hspace*{-15pt}
	\begin{minipage}[b]{0.33\textwidth}
		\includegraphics[width=\textwidth]{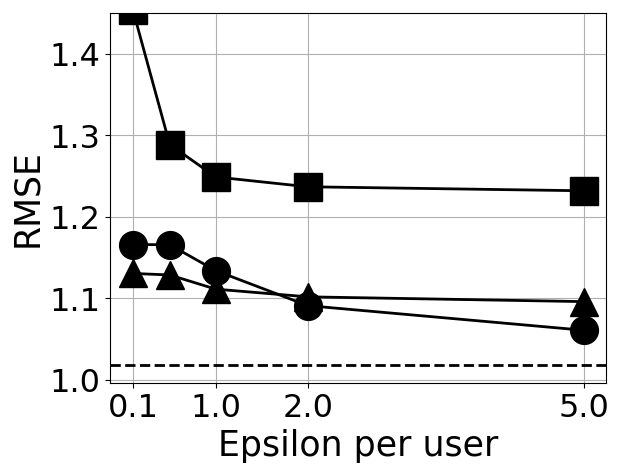}
	\end{minipage}
	& \hspace*{-15pt}
	\begin{minipage}[b]{0.33\textwidth}
		\includegraphics[width=\textwidth]{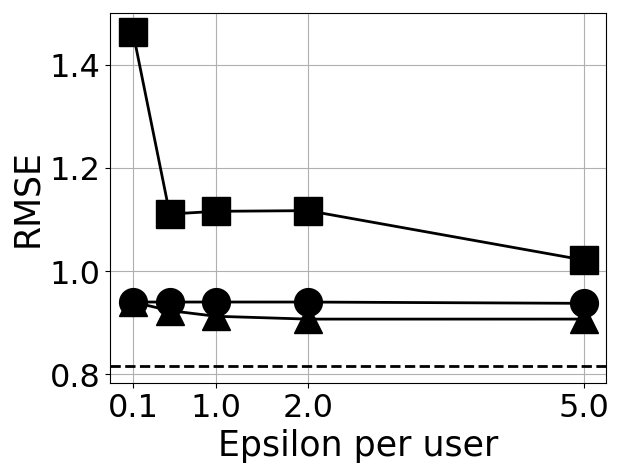}
	\end{minipage}\\
(a)&(b)&(c)\\
\hspace*{-15pt}
	\begin{minipage}[b]{0.33\textwidth}
	\includegraphics[width=\textwidth]{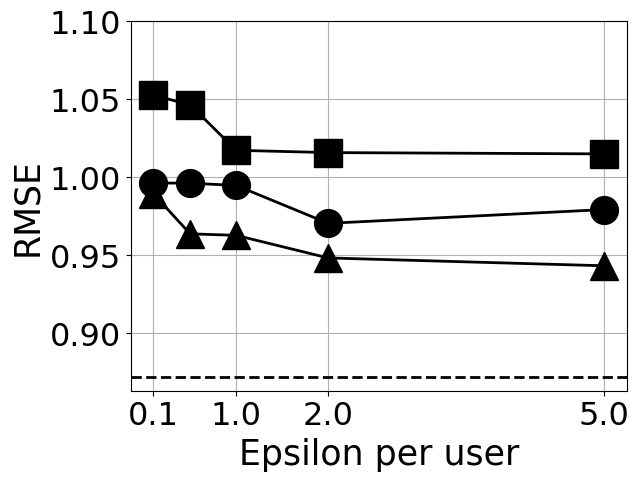}
\end{minipage}
& \hspace*{-15pt}
\begin{minipage}[b]{0.33\textwidth}
	\includegraphics[width=\textwidth]{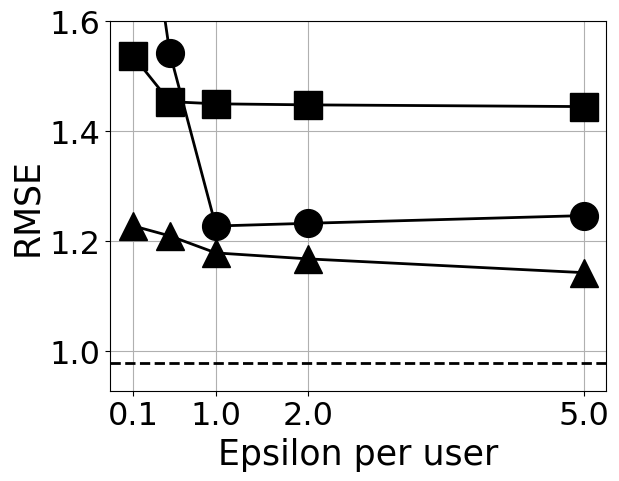}
\end{minipage}
&		\hspace*{-15pt}
\begin{minipage}[b]{0.33\textwidth}
	\includegraphics[width=\textwidth]{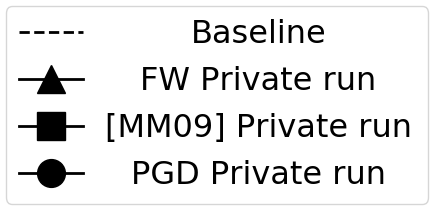}
\end{minipage}\\
(d)&(e)&(f)
\end{tabular}
	\caption{Root mean squared error (RMSE) vs. $\eps$, on (a) synthetic, (b) Jester, (c) MovieLens10M, (d) Netflix, and (e) Yahoo! Music datasets, for $\delta = 10^{-6}$. A legend for all the plots is given in (f).}
	\label{fig:syn}
\end{figure}

\emph{Results:} Figure \ref{fig:syn} shows the results of our experiments\footnote{In all our experiments, the implementation of private FW with Oja's method (Algorithm \ref{Alg:Eigs}) did not suffer any perceivable loss of accuracy as compared to the variant in Algorithm \ref{Algo:PrivFW}; all the plots in Figure~\ref{fig:syn} remain identical.}. 
Even though all the considered private algorithms satisfy Joint DP, our private FW method almost always incurs a significantly lower test RMSE than the two private baselines. Note that although non-private PGD provides similar empirical accuracy as non-private FW, the difference in performance for their private versions can be attributed to the noise being calibrated to a rank-one update for our private Frank-Wolfe.

\section{Future directions}
\label{sec:Discussion}

For future work, it is interesting to understand the optimal dependence of the generalization error for DP matrix completion w.r.t. the number of users and the number of items. Also, extending our techniques to other popular matrix completion methods, like alternating minimization, is another promising direction. 

\section*{Acknowledgements}

The authors would like to thank Ilya Mironov, and the anonymous reviewers, for their helpful comments. This material is in part based upon work supported by NSF grants CCF-1740850 and IIS-1447700, and a grant from the Sloan foundation. 
\bibliographystyle{plain}
\bibliography{reference}
\ifnum\supp=1
\appendix
\section{Frank-Wolfe algorithm}
\label{sec:FW}

We use the classic Frank-Wolfe algorithm \cite{FW56} as one of the optimization building blocks for our differentially private algorithms. 
In Algorithm \ref{Algo:FW},  we state the Frank-Wolfe method to solve the following convex  optimization problem:
\begin{equation}
	\hY=\arg\min\limits_{\nuc{Y}\leq k}\frac{1}{2|\Omega|}\lfrob{\pomega\left(Y-\mat{Y}^*\right)}^2.
	\label{eq:matCompletion}
\end{equation}
In this paper, we use the approximate version of the algorithm from \cite{Jag13}. The only difference is that, instead of using an exact minimizer to the linear optimization problem, Line \ref{line:abcdanon}
of Algorithm \ref{Algo:FW} uses an oracle that minimizes the problem up to a slack of $\gamma$. In the following, we provide the convergence guarantee for Algorithm \ref{Algo:FW}.

\begin{algorithm}[tb]
\caption{Approximate Frank-Wolfe algorithm}
\label{Algo:FW}
\begin{algorithmic}
\STATE {\bfseries Input:} Set of revealed entries: $\Omega$, operator: $\pomega$, matrix: $\pomega(Y^*)\in\re^{m\times n}$, nuclear norm constraint: $k$, time bound: $T$, slack parameter: $\gamma$ 
\STATE $\matT{Y}{0}\leftarrow \{0\}^{m\times n}$
\FOR{$t\in[T]$}
\STATE $\matT{W}{t-1}\leftarrow\frac{1}{|\Omega|}\pomega\left(\matT{Y}{t-1}-Y^*\right)$
\STATE Obtain $\matT{Z}{t-1}$ with $\nuc{\matT{Z}{t-1}}\leq k$ s.t.   {$\left(\ip{\matT{W}{t-1}}{\matT{Z}{t-1}}-\min\limits_{\nuc{\Theta}\leq k}\ip{\matT{W}{t-1}}{\Theta}\right)\leq \gamma$} \label{line:abcdanon} 
\STATE $\matT{Y}{t}\leftarrow\left(1-\frac{1}{T}\right)\matT{Y}{t-1}+\frac{\matT{Z}{t-1}}{T}$ \label{line:update12}
\ENDFOR
\STATE Return $\matT{Y}{T}$
\end{algorithmic}
\end{algorithm}

\mypar{Note} Observe that the algorithm converges at the rate of $O(1/T)$ even with an error slack of $\gamma$. While such a convergence rate is sufficient for us to prove our utility guarantees, we observe that this rate is rather slow in practice. 

\begin{theorem}[Utility guarantee]
	Let $\gamma$ be the slack in the linear optimization oracle in Line \ref{line:abcdanon}
	of Algorithm \ref{Algo:FW}. Then, following is true for $\matT{Y}{T}$:
	{
	\begin{align*}
	 & \widehat{F}\left(\matT{\Ypriv}{T}\right)   -\min\limits_{\nuc{Y}\leq k}\widehat{F}\left(Y\right) \leq \frac{k^2}{|\Omega|T}+\gamma.
	\end{align*}
	}
	\label{thm:utilFW}
\end{theorem}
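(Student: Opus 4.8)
The plan is to carry out the textbook Frank--Wolfe descent analysis, taking care to track the additive oracle slack $\gamma$. Write $\bar{Y}=\argmin_{\nuc{Y}\le k}\widehat{F}(Y)$ for the constrained minimizer and set $h_t=\widehat{F}(\matT{Y}{t})-\widehat{F}(\bar{Y})$ for the suboptimality at step $t$. The structural fact I would exploit is that $\widehat{F}(Y)=\frac{1}{2|\Omega|}\lfrob{\pomega(Y-Y^*)}^2$ is quadratic with ``Hessian'' acting as $\frac{1}{|\Omega|}\pomega$, so expanding $\widehat{F}$ along the update $\matT{Y}{t}=(1-\frac1T)\matT{Y}{t-1}+\frac1T\matT{Z}{t-1}$ gives the exact identity
\[
\widehat{F}(\matT{Y}{t})=\widehat{F}(\matT{Y}{t-1})+\tfrac1T\ip{\nabla\widehat{F}(\matT{Y}{t-1})}{\matT{Z}{t-1}-\matT{Y}{t-1}}+\tfrac{1}{2T^2|\Omega|}\lfrob{\pomega(\matT{Z}{t-1}-\matT{Y}{t-1})}^2 .
\]
This descent identity is the object the rest of the argument manipulates.

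I would then bound its two non-elementary terms. For the quadratic (curvature) term, both $\matT{Z}{t-1}$ and $\matT{Y}{t-1}$ are feasible, so their Frobenius norms are dominated by their nuclear norms and hence by $k$; since $\pomega$ is a contraction, $\lfrob{\pomega(\matT{Z}{t-1}-\matT{Y}{t-1})}\le\lfrob{\matT{Z}{t-1}-\matT{Y}{t-1}}\le 2k$, so the curvature term is at most $\frac{2k^2}{T^2|\Omega|}$. For the linear term, the defining slack property of the oracle in Line~\ref{line:abcdanon}, together with feasibility of $\bar{Y}$, gives $\ip{\nabla\widehat{F}(\matT{Y}{t-1})}{\matT{Z}{t-1}}\le\ip{\nabla\widehat{F}(\matT{Y}{t-1})}{\bar{Y}}+\gamma$; combining this with convexity of $\widehat{F}$, i.e.\ $\ip{\nabla\widehat{F}(\matT{Y}{t-1})}{\bar{Y}-\matT{Y}{t-1}}\le\widehat{F}(\bar{Y})-\widehat{F}(\matT{Y}{t-1})=-h_{t-1}$, bounds the linear term by $\gamma-h_{t-1}$. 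Substituting both bounds and rewriting in terms of $h_t$ yields the one-step recursion
\[
h_t\le\Bigl(1-\tfrac1T\Bigr)h_{t-1}+\tfrac{\gamma}{T}+\tfrac{2k^2}{T^2|\Omega|}.
\]

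Finally I would unroll this recursion over $t=1,\dots,T$. Weighting the two inhomogeneous terms by the geometric factors $\sum_{j=0}^{T-1}(1-\frac1T)^j\le T$ shows that the slack contributes at most $\gamma$ (it stays additive rather than compounding) and the curvature contributes at most $\frac{2k^2}{|\Omega|T}$, which already matches the claimed bound up to the hidden constant. The step I expect to be the main obstacle is disposing of the contracted initial error: the fixed step size $\frac1T$ means the naive geometric bound only shrinks $h_0=\widehat{F}(\mathbf{0})-\widehat{F}(\bar{Y})$ by a constant factor, so to get the clean $O(k^2/(|\Omega|T))$ rate one must instead close the recursion by the standard Frank--Wolfe induction (as in \cite{Jag13}), which leverages the per-step decrease proportional to the current gap to drive the suboptimality down to $O(k^2/(|\Omega|T))$. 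The one place requiring care beyond the classical statement is verifying that the slack $\gamma$ propagates through that induction as a purely additive $+\gamma$ term.
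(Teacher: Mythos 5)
Your argument retraces the paper's proof almost step for step: the paper likewise combines a per-step smoothness expansion (phrased via a curvature constant $C_f$ rather than your exact quadratic identity), the $\gamma$-slack of the oracle, and convexity (phrased via the duality gap at the exact linear minimizer $\matT{G}{t-1}$ rather than directly at $\bar Y$; the two are equivalent) to obtain precisely your recursion $h_t\le\left(1-\frac{1}{T}\right)h_{t-1}+\frac{\gamma}{T}+\frac{C_f}{2T^2}$, and then unrolls it with a geometric series. The only quantitative difference is the constant: your direct bound $\lfrob{\pomega\left(\matT{Z}{t-1}-\matT{Y}{t-1}\right)}\le 2k$ yields $\frac{2k^2}{|\Omega|T}+\gamma$, while the paper invokes Lemma 1 of \cite{shalev2011large} to get $C_f\le\frac{2k^2}{|\Omega|}$ and hence the stated $\frac{k^2}{|\Omega|T}+\gamma$; since the theorem has no hidden constant, as written you prove a factor-$2$ weaker inequality.

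The substantive point is the obstacle you flag at the end, and two things should be said about it. First, your proposed repair does not work: the induction in \cite{Jag13} is tied to the step schedule $\eta_t=\frac{2}{t+2}$, whose first step ($\eta=1$) annihilates the initial error; with Algorithm \ref{Algo:FW}'s constant step $\frac{1}{T}$ the recursion genuinely leaves a residual $\left(1-\frac{1}{T}\right)^T h_0\approx e^{-1}h_0$, and no induction can remove it, because the recursion is essentially tight: in the one-dimensional instance $m=n=|\Omega|=1$, $k=1$, $Y^*=c\gg 1$ with an exact oracle ($\gamma=0$), the iterates satisfy $h_T\ge\frac{c-1}{4}$ for every $T\ge 2$, while the claimed bound is $\frac{1}{T}$. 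Second, and this is why your attempt should not be judged against a gold standard that does not exist, the paper's own proof has exactly the same hole: its geometric-series line bounds $h\left(\matT{Y}{T}\right)$ by $\frac{C_f}{2T^2}\left(1+\frac{2T\gamma}{C_f}\right)\sum_{j\ge 0}\left(1-\frac{1}{T}\right)^j$, which silently discards the $\left(1-\frac{1}{T}\right)^T h_0$ contribution; that step is justified only under an additional assumption such as $h_0\le\frac{C_f}{2T}+\gamma$, which is never stated. So your instinct correctly located the one unsound step — you simply deferred it to a citation that does not cover the constant-step case, whereas the paper glossed over it. A fully rigorous statement needs either the $\frac{2}{t+2}$ schedule (with the theorem re-proved as in \cite{Jag13}) or an explicit hypothesis controlling $h_0$ (e.g., $\nuc{Y^*}\le k$, which still only yields an additive $\Theta\left(\frac{k^2}{|\Omega|}\right)$ term rather than $O\left(\frac{k^2}{|\Omega|T}\right)$).
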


\begin{proof}[Proof (Adapted from \cite{Jag13})]
	Let $\mathcal{D}\in\re^{m\times n}$ some fixed domain. We will define the curvature parameter $C_f$ of any differentiable function $f:\mathcal{D}\to\re$ to be the following:
	\begin{equation*}
	C_f=\max\limits_{\substack{x,s\in\mathcal{D}, \mu\in[0,1]:\\  y=x+\mu(s-x)}}\frac{2}{\mu^2}\left(f(y)-f(x)-\ip{y-x}{\grad f(x)}\right).
	\label{eq:curvature}
	\end{equation*}
	
	\sloppy In the optimization problem in \eqref{eq:matCompletion}, let $f(Y)=\frac{1}{2|\Omega|}\frob{\pomega\left(Y-Y^*\right)}^2$, and $\matT{G}{t-1}=\arg\min\limits_{\nuc{\Theta}\leq k}\ip{\matT{W}{t-1}}{\Theta}$, where $\matT{W}{t-1}$ is as defined in Line 3 of Algorithm \ref{Algo:FW}. We now have the following due to smoothness:
	{
	\begin{align*}
	f\left(\matT{Y}{t}\right)&=f\left(\matT{Y}{t-1}+\frac{1}{T}\left(\matT{Z}{t-1}-\matT{Y}{t-1}\right)\right)\nonumber\\
	&\leq f\left(\matT{Y}{t-1}\right)+\frac{1}{2T^2}C_f    +\frac{1}{T}\ip{\matT{Z}{t-1}-\matT{Y}{t-1}}{\grad f\left(\matT{Y}{t-1}\right)} .
	\label{eq:as123as} \numberthis
	\end{align*}}
	Now, by the $\gamma$-approximation property in Line 4 of Algorithm \ref{Algo:FW}, we have:\\ {
		\begin{align*}
		& \ip{\matT{Z}{t-1}-\matT{Y}{t-1}}{\grad f\left(\matT{Y}{t-1}\right)}     \leq \ip{\matT{G}{t-1}-\matT{Y}{t-1}}{\grad f\left(\matT{Y}{t-1}\right)}+\gamma.
		\end{align*}} Therefore, we have the following from \eqref{eq:as123as}:
	\begin{align*}
	 f\left(\matT{Y}{t}\right) & \leq f\left(\matT{Y}{t-1}\right) +\frac{C_f}{2T^2}\left(1+\frac{2T\gamma}{C_f}\right)    +\frac{1}{T}\ip{\matT{G}{t-1}-\matT{Y}{t-1}}{\grad f\left(\matT{Y}{t-1}\right)}. \numberthis
	\label{eq:abcd12a}
	\end{align*}
	Recall the definition of  $ \hY $  from \eqref{eq:matCompletion}, and let $h(\Theta)=f(\Theta)-f(\hY)$. By convexity, we have the following (also called the duality gap):
	\begin{equation}
	\ip{\matT{Y}{t}-\matT{G}{t}}{\grad f\left(\matT{Y}{t}\right)}\geq h\left(\matT{Y}{t}\right).
	\label{eq:duality}
	\end{equation}
Therefore, from \eqref{eq:abcd12a} and \eqref{eq:duality}, we have the following:
		\begin{align*}
		h\left(\matT{Y}{T}\right) &\leq h\left(\matT{Y}{T-1}\right)-\frac{h\left(\matT{Y}{T-1}\right)}{T}  +\frac{C_f}{2T^2}\left(1+\frac{2T\gamma}{C_f}\right)   \\
		&=\left(1-\frac{1}{T}\right)h\left(\matT{Y}{T-1}\right)+\frac{C_f}{2T^2}\left(1+\frac{2T\gamma}{C_f}\right)\\
		&\leq\frac{C_f}{2T^2}\left(1+\frac{2T\gamma}{C_f}\right)  \cdot  \left(1+\left(1-\frac{1}{T}\right)+\left(1-\frac{1}{T}\right)^2+\cdots\right) \\
		& \leq \frac{C_f}{2T}\left(1+\frac{2T\gamma}{C_f}\right)  =\frac{C_f}{2T}+\gamma\\
		 \Leftrightarrow f\left(\matT{Y}{T}\right)-f\left(\hY\right)& \leq \frac{C_f}{2T}+\gamma. \numberthis \label{eq:aflk123a} 
		\end{align*}
	With the above equation in hand, we bound the term $C_f$ for the stated $f(\Theta)$ to complete the proof. Notice that  $\frac{2k^2}{|\Omega|}$ is an upper bound on the curvature constant $C_f$ (See Lemma 1 from \cite{shalev2011large}, or Section 2 of \cite{clarkson2010coresets}, for a proof). Therefore, from \eqref{eq:aflk123a}, we get:
	\begin{equation*}
	f\left(\matT{Y}{T}\right)-f\left(\hY\right)\leq \frac{k^2}{|\Omega|T}+\gamma,
	\label{eq:lsdkf12}
	\end{equation*}which completes the proof.
\end{proof}
\section{Private matrix completion via singular value decomposition (SVD)}
\label{app:SVDapprox}

In this section, we study a simple SVD-based algorithm for differentially private matrix completion. Our SVD-based algorithm for matrix completion just computes a low-rank approximation of $\pomega(Y^*)$, but still provides {\em reasonable} error guarantees \cite{keshavan2010matrix}. Moreover, the algorithm forms a foundation for more sophisticated algorithms like alternating minimization \cite{HW14}, singular value projection \cite{JMD10} and singular value thresholding \cite{CCS10}. Thus, similar ideas may be used to extend our approach.

\mypar{Algorithmic idea} At a high level, given rank $r$,  Algorithm \ref{Algo:PrivSVD} first computes a differentially private version of the top-$r$ right singular subspace of $\pomega(Y^*)$, denoted by $V_r$. Each user projects her data record onto $V_r$ (after appropriate scaling) to complete her row of the matrix. Since each user's completed row depends on the other users via the global computation which is performed under differential privacy, the overall agorithm satisfies \jc. In principle, this is the same as in Section \ref{sec:privFW}, except now it is a direct rank-$r$ decomposition instead of an iterative rank-1 decomposition. Also, our overall approach is similar to that of \cite{MM09}, except that each user in \cite{MM09} uses a nearest neighbor algorithm  in the local computation phase (see Algorithm \ref{Algo:PrivSVD}). Additionally, in contrast to \cite{MM09}, we provide a formal generalization guarantee.
\begin{algorithm}
\caption{Private Matrix Completion via SVD}
\label{Algo:PrivSVD}
\begin{algorithmic}
\STATE {\bfseries Input:} Privacy parameters: $(\epsilon,\delta)$, matrix dimensions: $(m,n)$,  uniform $\ell_2$-bound on the rows of $\pomega(Y^*)$: $L$, and rank bound: $r$ 
\STATE {\bf Global computation: }  Compute the top-$r$ subspace $V_r$ for the matrix ${\widehat{W}}\leftarrow \sum\limits_{i=1}^m W_i+ N$, where {$W_i=\Pi_{L}\left(\pomega\left({Y^*_i}\right)\right)^\top\Pi_{L}\left(\pomega\left({Y^*_i}\right)\right)$}, $\Pi_{L}$ is the projection onto the $\ell_2$-ball of radius $L$, {$N\in\re^{n\times n}$} corresponds to a matrix with i.i.d. entries from {$\mathcal{N}(0,\sigma^2)$}, and \mbox{$\sigma\leftarrow L^2 \sqrt{64\log(1/\delta)}/\epsilon$}
\STATE {\bf Local computation:} Each user $i$ computes the $i$-th row of the private approximation $\widehat{Y}$: \mbox{$\widehat{Y}_i\leftarrow \frac{mn}{|\Omega|}\pomega\left(Y^*_i\right)V_r {V_r}^\top$}
\end{algorithmic}
\end{algorithm}

\subsection{Privacy and utility analysis}
\label{sec:privUtilSVD}

We now present the privacy and generalization guarantees for the above algorithm. 

\begin{theorem}
	Algorithm \ref{Algo:PrivSVD} satisfies $(\epsilon,\delta)$-\jc.
	\label{thm:privSVD}
\end{theorem}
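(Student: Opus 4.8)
The plan is to follow the standard ``billboard'' argument for joint differential privacy: first establish that the global computation producing the subspace $V_r$ is $(\epsilon,\delta)$-standard DP (Definition~\ref{defn:dp}), and then observe that each user's local output is a deterministic function of \emph{only} her own data together with the common, privately-computed signal $V_r$. This is exactly the structure already exploited for Algorithm~\ref{Algo:PrivFW}, so the argument parallels the proof of Theorem~\ref{thm:priv}, applied to a single Gaussian release rather than a $T$-fold one.

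First I would analyze the global step. The matrix $\widehat{W} = \sum_{i=1}^m W_i + N$ is a noisy sum of per-user contributions $W_i = \Pi_{L}(\pomega(Y^*_i))^\top \Pi_{L}(\pomega(Y^*_i))$. Since $\Pi_{L}$ projects each row onto the $\ell_2$-ball of radius $L$, each $W_i$ is a rank-one PSD matrix with $\frob{W_i} = \ltwo{\Pi_{L}(\pomega(Y^*_i))}^2 \le L^2$. Replacing the data of a single agent $i$ changes only the term $W_i$, so the Frobenius-norm ($\ell_2$) sensitivity of $D \mapsto \sum_i W_i$ is at most $\frob{W_i - W_i'} \le 2L^2$. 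With $N$ having i.i.d. $\mathcal{N}(0,\sigma^2)$ entries and $\sigma = L^2\sqrt{64\log(1/\delta)}/\epsilon$, the Gaussian mechanism (equivalently, the concentrated-DP bound of \cite{bun2016concentrated} used in the proof of Theorem~\ref{thm:priv}) ensures that $D \mapsto \widehat{W}$ is $(\epsilon,\delta)$-standard DP. Because $V_r$ is obtained from $\widehat{W}$ by a data-independent deterministic map (the top-$r$ eigenspace), post-processing immunity yields that $D \mapsto V_r$ is also $(\epsilon,\delta)$-standard DP.

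Next I would pass from standard DP of $V_r$ to joint DP of the whole mechanism. The output delivered to agent $j$ is $\widehat{Y}_j = \frac{mn}{|\Omega|}\pomega(Y^*_j) V_r V_r^\top$, which is a function of (i) the common subspace $V_r$ and (ii) agent $j$'s own data $\pomega(Y^*_j)$ alone. Hence, for any fixed agent $i$, the tuple $\mathcal{A}_{-i}(D) = (\widehat{Y}_j)_{j\neq i}$ is a deterministic function of $V_r$ and $D_{-i}$ only, and depends on $d_i$ exclusively through $V_r$. Fixing $D_{-i}$ and viewing $\mathcal{A}_{-i}$ as a post-processing of the $(\epsilon,\delta)$-DP quantity $V_r$, the inequality of Definition~\ref{defn:jc} follows directly from the standard-DP guarantee for $V_r$ together with post-processing immunity.

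The only genuinely quantitative step is the sensitivity bound $\frob{W_i - W_i'}\le 2L^2$ and the verification that the chosen $\sigma$ meets the Gaussian-mechanism requirement; everything else is structural. The main subtlety to handle carefully is that joint DP constrains the \emph{joint} distribution of all $m-1$ other outputs simultaneously, not each one marginally, which is precisely why it is essential that the shared signal $V_r$ be private as a single object and that no $\widehat{Y}_j$ with $j \ne i$ ever reuse the privatizing noise or otherwise expose $d_i$. Since this matches the argument already used for Algorithm~\ref{Algo:PrivFW}, the proof reduces to reusing the privacy analysis underlying Theorem~\ref{thm:priv}.
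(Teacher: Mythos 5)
Your proof is correct and takes essentially the same route as the paper's: the paper establishes this theorem by deferring to the proof of Theorem~\ref{thm:priv}, which is precisely the combination you spell out --- an $(\epsilon,\delta)$-DP guarantee for the global covariance release via the Gaussian mechanism (with the concentrated-DP bounds of \cite{bun2016concentrated}), followed by the post-processing/billboard argument showing that each user's local output depends on other users' data only through the private signal $V_r$. Your write-up is in fact more explicit than the paper's one-line deferral, and correctly adapts the sensitivity bound to $2L^2$ for the single-shot release of $\sum_i W_i$ in Algorithm~\ref{Algo:PrivSVD}.
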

The proof of privacy for Algorithm \ref{Algo:PrivSVD} follows immediately from the proof of Theorem \ref{thm:priv}, as the key step of computing the top eigenvectors of the $W$ matrix remains the same. 

For the generalization error bound for Algorithm \ref{Algo:PrivSVD}, we use the standard low-rank matrix completion setting, i.e., entries are sampled i.i.d., and the underlying matrix $Y^*$ is incoherent \ref{def:incoherence}. Intuitively, incoherence ensures that the left and right singular subspaces of a matrix have a low correlation with the standard basis. The scale of $\mu$ is $[0,\max\{m,n\}]$. Since, we are assuming $m\geq n$ throughout the paper, $\mu\in[0,m]$.
\begin{definition}[$\mu$-incoherence \cite{jin2016provable}]
	Let $Y\in\re^{m\times n}$ be a matrix of rank at most $r$, and let $U\in\re^{m\times r}$ and $V\in\re^{r\times n}$ be the left and right singular subspaces of $Y$. Then, the incoherence $\mu$ is the following:
	{$$\mu=\max\left\{\frac{m}{r}\max\limits_{1\leq i\leq m}\ltwo{UU^Te_i},\frac{n}{r}\max\limits_{1\leq i\leq n}\ltwo{VV^Tf_i}\right\}.$$}
	Here, $e_i\in\re^{m}$ and $f_i\in\re^{n}$ are the $i$-th standard basis vectors in $m$ and $n$ dimensions, respectively.
	\label{def:incoherence}
\end{definition}

Under the above set of assumptions, we get: 
\begin{theorem}
	Let $Y^*\in \re^{m\times n}$ be a rank-$r$, $\mu$-incoherent matrix with condition number $\kappa=\ltwo{Y^*}/\lambda_r(Y^*)$, where $\lambda_r(\cdot)$ corresponds to the $r$-th largest singular value. Also, let the set of known entries $\Omega$ be sampled uniformly at random s.t. $|\Omega|\geq c_0 \kappa^2 \mu  m r  \log m$ for a large constant $c_0>0$. Let $\ltwo{\pomega(Y^*)_i}\leq L$ for every row $i$ of $Y^*$. Then, with probability at least $2/3$ over the outcomes of the algorithm, the following holds for $\Yp$ estimated by Algorithm~\ref{Algo:PrivSVD}: 
	{ $$\widehat{F}(\Yp)=O\left(\frac{L^4\kappa^4 m^3n^4\cdot r\cdot \Delta^2_{\epsilon,\delta}}{|\Omega|^4 \|Y^*\|_2^2} + {\frac{\mu \|Y^*\|_2^2\cdot r^2 \log m}{n\cdot |\Omega|}}\right),$$}
	where the privacy parameter is $\Delta_{\epsilon,\delta}=\sqrt{64\log(1/\delta)}/\epsilon$. 
	
	Using $L\leq \ltwo{Y^*}$, we get: 
		{ \begin{align*}
		 F(\Yp) & = O\left(\frac{\min\left(L^2, \frac{\mu^2 r n}{m}\right)\kappa^4 m^3n^4\cdot r\cdot \Delta^2_{\epsilon,\delta}}{|\Omega|^4 }  + {\frac{\mu \|Y^*\|_2^2\cdot r^2 \log m}{n\cdot |\Omega|}}\right).
		\end{align*}}
		The $O\left(\cdot\right)$ hides only universal constants.
	\label{thm:svd}
\end{theorem}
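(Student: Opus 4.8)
The plan is to show that the privately estimated right subspace $V_r$ is close to the true right singular subspace $V^*$ of $Y^*$, and then to control the error of the rescaled observations projected onto $V_r$. First I would observe that since $\ltwo{\pomega(Y^*)_i}\le L$ for every row, the clipping operator $\Pi_L$ acts as the identity, so the noiseless Gram matrix is exactly $W=\pomega(Y^*)^\top\pomega(Y^*)$ and the released matrix is $\widehat W=\pomega(Y^*)^\top\pomega(Y^*)+N$ with $N$ having i.i.d.\ $\mathcal N(0,\sigma^2)$ entries and $\sigma=L^2\Delta_{\epsilon,\delta}$. Writing $B=\tfrac{mn}{|\Omega|}\pomega(Y^*)$ for the rescaled observations (so that $\E[B]=Y^*$ and $\Yp=BV_rV_r^\top$), and using $Y^*=Y^*V^*{V^*}^\top$, I would decompose the estimation error as
\begin{equation*}
\Yp-Y^*=\bigl(B-Y^*\bigr)V_rV_r^\top+Y^*\bigl(V_rV_r^\top-V^*{V^*}^\top\bigr),
\end{equation*}
where the first term is pure sampling/reconstruction error and the second is governed by the distance between $V_r$ and $V^*$.

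For the subspace term I would compare the top-$r$ eigenspace of $\widehat W$ with that of $(Y^*)^\top Y^*$ via the Davis--Kahan (Wedin) $\sin\Theta$ theorem. The perturbation splits as
\begin{equation*}
\widehat W-\frac{|\Omega|^2}{(mn)^2}(Y^*)^\top Y^*=\underbrace{N}_{\text{noise}}+\underbrace{\Bigl(\pomega(Y^*)^\top\pomega(Y^*)-\frac{|\Omega|^2}{(mn)^2}(Y^*)^\top Y^*\Bigr)}_{=:E_{\sf samp}}.
\end{equation*}
The Gaussian block obeys $\|N\|_2=O(\sigma\sqrt n)=O(L^2\Delta_{\epsilon,\delta}\sqrt n)$ by a standard random-matrix bound, while a matrix-Bernstein argument for uniform sampling of an incoherent matrix controls $\|E_{\sf samp}\|_2$, including the diagonal bias $\E[\pomega(Y^*)^\top\pomega(Y^*)]-\tfrac{|\Omega|^2}{(mn)^2}(Y^*)^\top Y^*$ (whose spectral norm is $O(\tfrac{|\Omega|}{mn}\cdot\tfrac{\mu r}{n}\|Y^*\|_2^2)$, since the algorithm does not remove it). The signal $\tfrac{|\Omega|^2}{(mn)^2}(Y^*)^\top Y^*$ has $r$-th eigenvalue $\tfrac{|\Omega|^2}{(mn)^2}\|Y^*\|_2^2/\kappa^2$, and the hypothesis $|\Omega|\ge c_0\kappa^2\mu m r\log m$ is exactly what forces both the diagonal bias and the sampling fluctuation to be dominated by this gap, so Davis--Kahan gives
\begin{equation*}
\bigl\|V_rV_r^\top-V^*{V^*}^\top\bigr\|_2=O\!\left(\frac{\|N\|_2+\|E_{\sf samp}\|_2}{\tfrac{|\Omega|^2}{(mn)^2}\,\|Y^*\|_2^2/\kappa^2}\right).
\end{equation*}

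Finally I would bound the two terms of the decomposition in Frobenius norm. Since $Y^*$ has rank $r$, the subspace term satisfies $\lfrob{Y^*(V_rV_r^\top-V^*{V^*}^\top)}\le\sqrt{2r}\,\|Y^*\|_2\,\|V_rV_r^\top-V^*{V^*}^\top\|_2$; plugging in the $\|N\|_2$ part, squaring, and normalizing by $1/(mn)$ reproduces exactly the first (privacy) term $\tfrac{L^4\kappa^4 m^3 n^4 r\,\Delta_{\epsilon,\delta}^2}{|\Omega|^4\|Y^*\|_2^2}$. The reconstruction term obeys $\lfrob{(B-Y^*)V_rV_r^\top}\le\sqrt r\,\|B-Y^*\|_2$, and the standard matrix-completion spectral concentration $\|B-Y^*\|_2=O\bigl(\sqrt{\tfrac{\mu r m\log m}{|\Omega|}}\,\|Y^*\|_2\bigr)$ (into which the $\|E_{\sf samp}\|_2$ contribution to the subspace distance also feeds) yields the second term $\tfrac{\mu\|Y^*\|_2^2 r^2\log m}{n|\Omega|}$. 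For the empirical risk $\widehat F$ I would additionally pass from the full Frobenius norm to the observed one using that, for incoherent low-rank matrices, $\tfrac1{|\Omega|}\lfrob{\pomega(\cdot)}^2$ concentrates around $\tfrac1{mn}\lfrob{\cdot}^2$; the generalization bound $F$ then follows from the full-Frobenius estimate directly, with $L\le\|Y^*\|_2$ together with the incoherence bound on the row norms of $Y^*$ producing the stated $\min(L^2,\mu^2 rn/m)$ factor.

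The main obstacle is the subspace-recovery step: establishing the sampling concentration for the Gram matrix $\pomega(Y^*)^\top\pomega(Y^*)$ with the correct dependence on $\mu$, $r$, and $\kappa$, and carefully accounting for the diagonal bias that the algorithm leaves in place. Ensuring that both of these are dominated by the signal eigengap is precisely where the sample-complexity hypothesis $|\Omega|\ge c_0\kappa^2\mu m r\log m$ is consumed, and tracking the powers of $m$, $n$, and $|\Omega|$ through the $\sin\Theta$ bound and the final normalizations is the most delicate bookkeeping.
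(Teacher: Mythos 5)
Your decomposition and bookkeeping reproduce the first (privacy) term correctly: the Gaussian block $\|N\|_2 = O(\sigma\sqrt{n})$ passed through the eigengap $\tfrac{|\Omega|^2}{(mn)^2}\|Y^*\|_2^2/\kappa^2$, multiplied by $\|Y^*\|_2\sqrt{2r}$, squared, and normalized by $mn$ gives exactly $\tfrac{L^4\kappa^4 m^3n^4 r\,\Delta_{\epsilon,\delta}^2}{|\Omega|^4\|Y^*\|_2^2}$, matching the paper. However, there is a genuine quantitative gap in how you handle the sampling error. Because you compare the private subspace $V_r$ directly to the true subspace $V^*$ of $Y^*$, the term $E_{\sf samp}$ must also pass through Davis--Kahan with that same gap. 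But $\|E_{\sf samp}\|_2$ scales with the \emph{top} eigenvalue: from $\|B-Y^*\|_2 = O\bigl(\|Y^*\|_2\sqrt{\mu m r\log m/|\Omega|}\bigr)$ one gets $\|E_{\sf samp}\|_2 = O\bigl(\tfrac{|\Omega|^2}{(mn)^2}\|Y^*\|_2^2\sqrt{\mu m r\log m/|\Omega|}\bigr)$, so the resulting $\sin\Theta$ contribution is $O\bigl(\kappa^2\sqrt{\mu m r\log m/|\Omega|}\bigr)$, and after multiplying by $\|Y^*\|_2\sqrt{2r}$, squaring, and dividing by $mn$, your second term comes out as $\kappa^4\cdot\tfrac{\mu\|Y^*\|_2^2 r^2\log m}{n|\Omega|}$ --- a factor $\kappa^4$ worse than the theorem claims. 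The hypothesis $|\Omega|\ge c_0\kappa^2\mu m r\log m$ only guarantees that the perturbation is dominated by the gap so that Davis--Kahan is applicable; it does not cancel this $\kappa^2$ amplification inside the bound. The same defect afflicts the diagonal bias you correctly identified: through the gap it contributes $\approx \kappa^2\mu r m/|\Omega|$ to the subspace distance, which is not dominated by the claimed second term unless $\kappa$ is polylogarithmic.

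The paper avoids this entirely by never passing the sampling error through an eigengap. It compares the private projector $\Pip_r$ only to the projector $\Pi_r$ onto the top-$r$ subspace of $B$ itself, so the Davis--Kahan perturbation is purely the Gaussian matrix $N$ (with gap $\alpha_r^2-\alpha_{r+1}^2$ controlled via Weyl's inequality and the sample-size hypothesis), and it handles the sampling error by the triangle inequality together with the Eckart--Young property: $\|B\Pi_r - Y^*\|_2 \le \|B\Pi_r - B\|_2 + \|B-Y^*\|_2 \le 2\|B-Y^*\|_2$, since $B\Pi_r$ is the best rank-$r$ spectral approximation of $B$ and $Y^*$ has rank $r$. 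This yields the statistical term with no $\kappa$-dependence, and it also makes the diagonal bias a non-issue, since the Gram matrix of the sampled data is never compared with the Gram matrix of $Y^*$. To repair your argument, replace the comparison of $V_r$ against $V^*$ by a comparison against the top-$r$ subspace of $B$, and insert the Eckart--Young step for the sampling part; the remainder of your calculation (noise norm, gap estimate, rank-$r$ Frobenius conversion, normalization) then goes through and recovers both terms of the theorem.
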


For a proof of this theorem, see Section \ref{sec:proofSVD}.

\begin{remark}
	Let $Y^*$ be a rank one incoherent matrix with $Y^*_{ij}=\Theta(1)$, $|\Omega|=m\sqrt{n}$, $L=O(n^{1/4})$, and $\mu=O(1)$. Notice that the spectral norm $\ltwo{Y^*}\approx \sqrt{mn}$. Hence, the first term in the bound reduces to $O\left(\frac{n^2}{m^2}\right)$ and the second error term is $O\left(\frac{1}{\sqrt{n}}\right)$ , whereas a trivial solution of $Y=0$ leads to $O(1)$ error. Similar to the behavior in Remark~\ref{rem:ge1}, the first term above increases with $n$, and decreases with increasing $m$ due to the noise added, while the second term decreases with increasing $n$ due to more sharing between users. 
	\label{rem:svd1}
\end{remark}

\begin{remark}
	Under the assumptions of Theorem~\ref{thm:svd}, the second term can be arbitrarily small for other standard matrix completion methods like the FW-based method (Algorithm~\ref{Algo:PrivFW}) studied in Section \ref{sec:privFW} above. However, the first error term for such methods can be significantly larger. For example, the error of Algorithm~\ref{Algo:PrivFW} in the setting of Remark \ref{rem:svd1}  is $\approx O\left(\frac{n^{13/24}}{m^{5/12}}\right)$ as the second term in Corollary \ref{cor:abcfd12} vanishes in this setting; in contrast, the error of the SVD-based method (Algorithm~\ref{Algo:PrivSVD}) is $O\left(\frac{n^2}{m^2}+\frac{1}{\sqrt{n}}\right)$. On the other hand, if the data does not satisfy the assumptions of Theorem~\ref{thm:svd}, then the error incurred by Algorithm~\ref{Algo:PrivSVD} can be significantly larger (or even trivial) when compared to that of Algorithm~\ref{Algo:PrivFW}.
\end{remark}

\subsubsection{Proof of Theorem \ref{thm:svd}}
\label{sec:proofSVD}
\begin{proof}
	Let $B=\frac{1}{p} \pomega(Y^*)$ where $p=|\Omega|/(m\cdot n)$ and let $V_r$ be the top-$r$ right singular subspace of $B$. Suppose $\Pi_r=V_r V_r^\top$ be the projector onto that subspace. Recall that $\vp_r$ is the right singular subspace defined in Algorithm \ref{Algo:PrivSVD} and let $\Pip_r=\vp_r \vp_r^T$ be the corresponding projection matrix. 
	
Then, using the triangular inequality, we have: 
{
		\begin{align*}
		\|B\Pip_r-Y^*\|_2 & \leq \|B\Pi_r-Y^*\|_2 + \|B\Pip_r-B\Pi_r\|_2  \\
		&  \leq c_1 {\|Y^*\|_2}\sqrt{\frac{\mu  m  r \log m}{|\Omega|}}  + \|B\Pip_r-B\Pi_r\|_2 \numberthis \label{eq:svd1},
		\end{align*}
}where the second inequality follows from the following standard result (Lemma \ref{lem:mcnorm}) from the matrix completion literature, and holds w.p. $\geq 1-1/m^{10}$. 
	\begin{lemma}[Follows from Lemma A.3 in \cite{jin2016provable}]\label{lem:mcnorm}
		Let $M$ be an $m \times n$ matrix with $m \geq n$, rank $r$, and incoherence $\mu$, and $\Omega$ be a subset of i.i.d. samples from $M$.  There exists universal constants $c_1$ and $c_0$ such that if $|\Omega| \geq c_0 \mu m r \log m$, then with probability at least $1 - 1/m^{10}$, we have: $$\ltwo{M - \frac{mn}{|\Omega|}\pomega (M)}\leq c_1 \ltwo{M}\sqrt{\frac{\mu\cdot m\cdot r \log m}{|\Omega|}}.$$
	\end{lemma}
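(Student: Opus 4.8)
The plan is to read Lemma~\ref{lem:mcnorm} for what it is: a matrix concentration statement asserting that $\frac{mn}{|\Omega|}\pomega(M)$ is a good spectral-norm approximation of $M$ under uniform i.i.d.\ sampling. Since $\frac{mn}{|\Omega|}\pomega(M)$ is (up to the rescaling convention) an unbiased estimator of $M$, the natural tool is the matrix Bernstein inequality applied to the deviation, with $\mu$-incoherence (Definition~\ref{def:incoherence}) supplying the per-term and variance bounds. First I would set $p = |\Omega|/(mn)$ and decompose the deviation into a sum of independent, mean-zero random matrices. Letting $\chi_{ij}\in\{0,1\}$ indicate $(i,j)\in\Omega$ with $\E[\chi_{ij}]=p$,
\[
M - \tfrac{1}{p}\pomega(M) \;=\; \sum_{i,j} M_{ij}\Bigl(1 - \tfrac{\chi_{ij}}{p}\Bigr)\, e_i f_j^\top \;=:\; \sum_{i,j} Z_{ij},
\]
where each $Z_{ij}$ is independent and $\E[Z_{ij}]=0$, and $\tfrac1p = \tfrac{mn}{|\Omega|}$.

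Next I would use incoherence to control the two quantities matrix Bernstein requires. Writing $M = U\Sigma V^\top$, Definition~\ref{def:incoherence} gives $\ltwo{U^\top e_i}\le\sqrt{\mu r/m}$ and $\ltwo{V^\top f_j}\le\sqrt{\mu r/n}$, hence the entrywise bound $|M_{ij}|\le \frac{\mu r}{\sqrt{mn}}\ltwo{M}$ together with the row/column bounds $\sum_j M_{ij}^2 \le \frac{\mu r}{m}\ltwo{M}^2$ and $\sum_i M_{ij}^2 \le \frac{\mu r}{n}\ltwo{M}^2$. Since $\bigl|1-\tfrac{\chi_{ij}}{p}\bigr|\le \tfrac1p$ and $\E\bigl[(1-\tfrac{\chi_{ij}}{p})^2\bigr]\le\tfrac1p$, these yield a uniform operator-norm bound $\ltwo{Z_{ij}}\le R := \frac{\mu r}{p\sqrt{mn}}\ltwo{M}$. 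For the variance, each $\E[Z_{ij}Z_{ij}^\top]$ is a diagonal rank-one term proportional to $e_i e_i^\top$, so $\ltwo{\sum_{ij}\E[Z_{ij}Z_{ij}^\top]}\le \frac{\mu r}{pm}\ltwo{M}^2$ and, symmetrically, $\ltwo{\sum_{ij}\E[Z_{ij}^\top Z_{ij}]}\le \frac{\mu r}{pn}\ltwo{M}^2$; using $m\ge n$, the variance proxy is $\sigma^2 := \frac{\mu r}{pn}\ltwo{M}^2$.

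Finally I would invoke matrix Bernstein: with probability at least $1-1/m^{10}$,
\[
\Bigl\|M - \tfrac1p\pomega(M)\Bigr\|_2 \;\lesssim\; \sqrt{\sigma^2\log m} \;+\; R\log m.
\]
Substituting $1/p = mn/|\Omega|$ gives $\sqrt{\sigma^2\log m}\asymp \ltwo{M}\sqrt{\mu m r\log m/|\Omega|}$, which is precisely the claimed right-hand side, while $R\log m \asymp \ltwo{M}\,\mu r\sqrt{mn}\log m/|\Omega|$. The remaining step is to verify that the linear (Bernstein) term is dominated by the sub-Gaussian term, i.e.\ $R\log m \lesssim \sqrt{\sigma^2\log m}$; squaring reduces this to $|\Omega|\gtrsim \mu r n\log m$, which is implied by the hypothesis $|\Omega|\ge c_0\mu m r\log m$ since $m\ge n$. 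Choosing $c_0$ large absorbs the Bernstein constants and the $\log m$ needed to drive the failure probability below $1/m^{10}$, yielding the stated $c_1$.

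I expect the main obstacle to be bookkeeping rather than conceptual. The delicate points are (a) reconciling the exact i.i.d.\ sampling model with the rescaling $\frac{mn}{|\Omega|}$ — under the Bernoulli model $|\Omega|$ is random and one works with $\frac1p$, while under sampling-with-replacement one sums over draws and must collapse repeats before identifying the estimator with $\frac{mn}{|\Omega|}\pomega(M)$ — and (b) pushing the tail probability to $1/m^{10}$, which is exactly what forces the $\log m$ factor and the large constant $c_0$ in the sampling hypothesis. Because the result is credited to Lemma~A.3 of \cite{jin2016provable}, the cleanest route in the paper is to cite their concentration bound directly; the matrix-Bernstein derivation sketched above is how I would reconstruct it from first principles.
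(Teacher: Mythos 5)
Your proposal is correct, but note that the paper does not actually prove Lemma~\ref{lem:mcnorm}: it is imported wholesale (``Follows from Lemma A.3 in \cite{jin2016provable}'') and used as a black box inside the proof of Theorem~\ref{thm:svd}. Your matrix-Bernstein reconstruction is essentially the standard derivation underlying that cited result, and the computations check out: the incoherence consequences $|M_{ij}|\le \frac{\mu r}{\sqrt{mn}}\ltwo{M}$ and $\sum_j M_{ij}^2\le \frac{\mu r}{m}\ltwo{M}^2$ follow correctly from Definition~\ref{def:incoherence}; the variance proxy $\frac{\mu r}{pn}\ltwo{M}^2$ (the larger of the two sides, since $m\ge n$) reproduces exactly the claimed rate $\ltwo{M}\sqrt{\mu m r \log m/|\Omega|}$; and your dominance check, reducing to $|\Omega|\gtrsim \mu r n\log m$ and hence implied by the hypothesis $|\Omega|\ge c_0\mu m r\log m$ via $m\ge n$, is precisely where the sampling assumption is consumed. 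The one point you flag in passing---reconciling the Bernoulli model with the stated i.i.d.\ sampling---is the only non-bookkeeping gap: under sampling with replacement, $\frac{mn}{|\Omega|}\pomega(M)$ is not exactly unbiased because $\pomega$ collapses repeated draws, so one must either prove the bound for the multiplicity-weighted sum and control the collision terms separately (few in expectation, by a birthday-type bound, when $|\Omega|=o(mn)$), or invoke a standard model-equivalence argument; either route costs only constants. What your approach buys over the paper's is a self-contained proof that makes explicit where each hypothesis enters ($\mu$-incoherence, $m\ge n$, the lower bound on $|\Omega|$, and the $\log m$ driving the $1/m^{10}$ tail); what the citation buys is brevity and delegating the sampling-model conventions to \cite{jin2016provable}, who fix them once in their own setting.
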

	Using Theorem 6 of \cite{dwork2014analyze}, the following holds with probability at least $2/3$,  \begin{equation}\label{eq:svd2}\ltwo{\Pip_r-\Pi_r}=O\left(\frac{L^2\sqrt{n}\Delta_{\epsilon,\delta}}{\alpha^2_r-\alpha_{r+1}^2}\right), \end{equation}
	where $\alpha_i$ is the $i$-th singular value of $\pomega(Y^*)=p \cdot B$. 
	
	Recall that $\kappa=\ltwo{Y^*}/\lambda_r(Y^*)$, where $\lambda_r$ is the $r$-th singular value of $Y^*$. Let  $|\Omega|\geq c_0 \kappa^2 \mu m r \log m$ with a  large  constant $c_0 > 0$. Then, using Lemma~\ref{lem:mcnorm} and Weyl's inequality, we have (w.p. $\geq 1-1/m^{10}$): 
	{
		\begin{align*}
		& \alpha_{r}\geq 0.9 \cdot p \frac{1}{\kappa} \|Y^*\|_2, \qquad \text{and} \qquad   \alpha_{r+1}\leq c_1 p \cdot {\|Y^*\|_2}\sqrt{\frac{\mu \cdot m \cdot r \log m}{|\Omega|}}\leq 0.1 \cdot \alpha_r  \numberthis\label{eq:svd3}
		\end{align*}}
	Similarly, \begin{equation}\label{eq:svd4} \|B\|_2 \leq 2 \|Y^*\|_2, w.p. \geq 1-1/m^{10}.\end{equation}
	Using \eqref{eq:svd1}, \eqref{eq:svd2}, \eqref{eq:svd3}, and \eqref{eq:svd4}, we have w.p. $\geq \frac{2}{3}-\frac{5}{m^{10}}$: 
		\begin{align*}
		\|B\Pip_r-Y^*\|_2&\leq 8\|Y^*\|_2 \cdot \frac{L^2\kappa^2\sqrt{n}\Delta_{\epsilon,\delta}}{p^2 \|Y^*\|_2^2}    + c_1 {\|Y^*\|_2}\sqrt{\frac{\mu \cdot m \cdot r \log m}{|\Omega|}} .
		\end{align*}
	Recall that $\Yp=\frac{1}{p} \pomega(Y^*)\Pip_r=B\Pip_r$. Hence: 
		\begin{align*}
		\frac{\|B\Pip_r-Y^*\|_2^2}{mn} & \leq O\left(\frac{L^4\kappa^4 n\Delta^2_{\epsilon,\delta}}{mn\cdot p^4 \|Y^*\|_2^2} \right) + c_1 {\|Y^*\|_2^2}{\frac{\mu \cdot m \cdot r \log m}{mn\cdot |\Omega|}} .
		\end{align*}
		The theorem now follows by using $\|A\|_F^2\leq r \|A\|_2^2$, where $r$ is the rank of $A$. 
\end{proof}

\section{Additional experimental evaluation}


\label{app:exptVal}
Here, we provide the empirical results for our private Frank-Wolfe algorithm (Algorithm~\ref{Algo:PrivFW}) as well as the `SVD after cleansing method' of \cite{MM09} for the following additional datasets:
\begin{enumerate}[\itemsep=0pt]
\item \emph{Synthetic-900:} We generate a random rank-one matrix $Y^*=uv^T$ with  unit $\ell_\infty$-norm, $m=500$K, and $n=900$.
\item \emph{MovieLens10M (Top 900):}  We pick the $n=900$ most rated movies from the Movielens10M dataset, which has $m\approx 70$K users of the $\approx 71$K users in the dataset. 
\item \emph{Netflix (Top 900):} We pick the $n=900$ most rated movies from the Netflix prize dataset, which has $m\approx477$K users of the $\approx480$K users in the dataset.
\item \emph{Yahoo! Music (Top 900):}  We pick the $n=900$ most rated songs from the Yahoo! music dataset, which has $m \approx 998$K users of the $\approx 1$M users in the dataset. We rescale the ratings to be from 0 to 5.
\end{enumerate}

\begin{figure}[ht]
	\centering
	\begin{tabular}{ccc}
		\hspace*{-15pt}
	\begin{minipage}[b]{0.33\textwidth}
		\includegraphics[width=\textwidth]{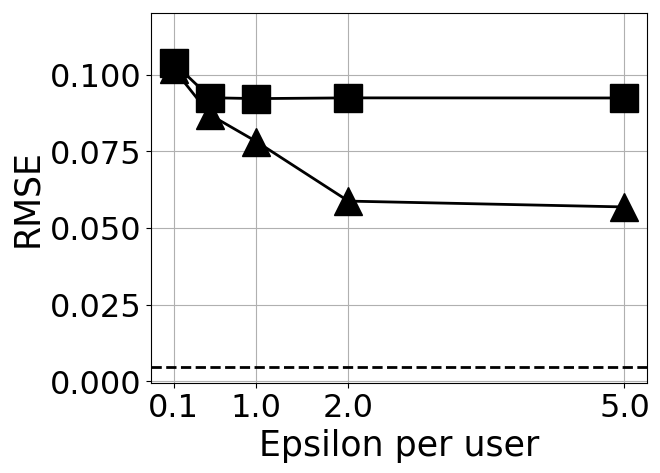}
	\end{minipage}
	&		\hspace*{-15pt}
	\begin{minipage}[b]{0.33\textwidth}
		\includegraphics[width=\textwidth]{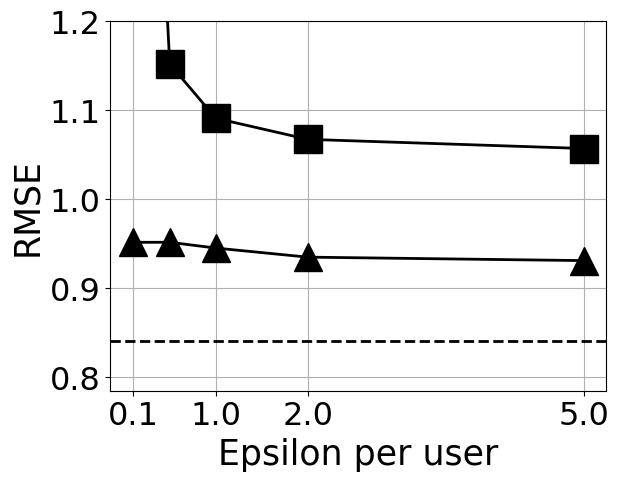}
	\end{minipage}
	& \hspace*{-15pt}
	\begin{minipage}[b]{0.33\textwidth}
		\includegraphics[width=\textwidth]{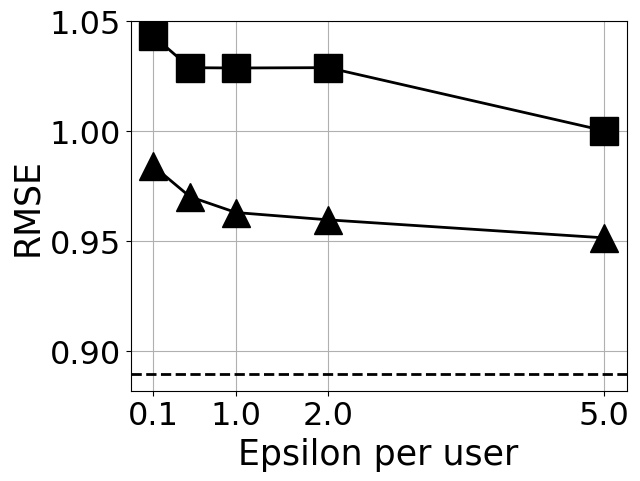}
	\end{minipage}\\
(a)&(b)&(c)\\
\hspace*{-15pt}
	\begin{minipage}[b]{0.33\textwidth}
	\includegraphics[width=\textwidth]{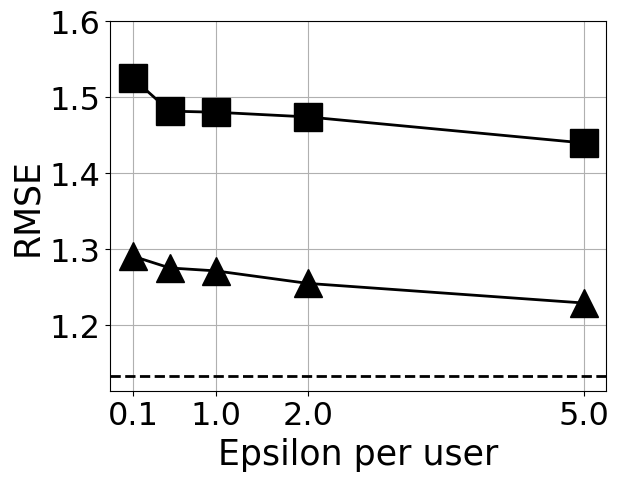}
\end{minipage}
& \hspace*{-15pt}
\begin{minipage}[b]{0.33\textwidth}
	\includegraphics[width=\textwidth]{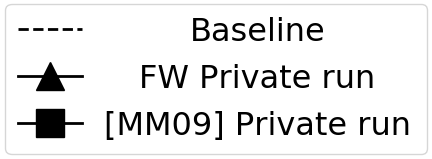}
\end{minipage}
&\\
(d)&(e)
\end{tabular}
	\caption{Root mean squared error (RMSE) vs. $\eps$, on (a) Synthetic-900, (b) MovieLens10M, (c) Netflix, and (d) Yahoo! Music datasets, for $\delta = 10^{-6}$. A legend for all the plots is given in (e).}
	\label{fig:syn900}
\end{figure}

We follow the same experimental procedure as in Section~\ref{sec:exptVal}. For each dataset, we cross-validate over the nuclear norm bound $k$, and the number of iterations $T$. For $k$, we set it to the actual nuclear norm for Synthetic-900 dataset, and choose from  $\{ 150000, 160000\}$ for Netflix,  $\{50000, 60000\}$ for MovieLens10M, and $\{260000, 270000\}$ for the Yahoo! Music dataset.  We choose $T$ from various values in $[5,50]$.

In Figure \ref{fig:syn900}, we show the results of our experiments on the Synthetic-900 dataset in plot (a), MovieLens10M (Top 900) in plot (b), Netflix (Top 900) in plot (c), and Yahoo! Music (Top 900) in plot (d). In all the plots, we see that the test RMSE for private Frank-Wolfe almost always incurs a significantly lower error than the method of \cite{MM09}.

\section{Omitted proofs and existing results}
\label{sec:omittedProofs}
In this section, we provide detailed proofs, and state the used existing results that have been omitted from the main body of the paper.

\subsection{Proofs of privacy and utility for Private Frank-Wolfe (Algorithm~\ref{Algo:PrivFW})}
\label{app:privFW}

\subsubsection{Proof of privacy}
\label{app:privFWpriv}

\begin{proof}[Proof of Theorem \ref{thm:priv}]
Note that we require $\eps > 2\log{\left(\frac{1}{\delta}\right)}$ for input parameters $(\eps, \delta)$ in Algorithm~\ref{Algo:PrivFW}. Assuming this is satisfied, let us consider the sequence of matrices $\matT{W}{1},\cdots,\matT{W}{T}$ produced by function $\mathcal{A}_{\sf global}$. Notice that if every user $i \in [m]$ knows this sequence, then she can construct her updates $\matT{Y_i}{1},\cdots,\matT{Y_i}{T}$ by herself \emph{independent} of any other user's data. Therefore, by the \emph{post-processing} property of differential privacy \cite{DMNS,DR14}, it follows that as long as function $\mathcal{A}_{\sf global}$ satisfies $(\epsilon,\delta)$-differential privacy, one can ensure $(\epsilon,\delta)$-\jc for Algorithm~\ref{Algo:PrivFW}, i.e., the combined pair of functions $\mathcal{A}_{\sf global}$ and $\mathcal{A}_{\sf local}$. (Recall that the post-processing property of differential privacy states that any operation performed on the output of a differentially private algorithm, without accessing the raw data, remains differentially private with the same level of privacy.) Hence, Lemma \ref{lem:diffPr} completes the proof of privacy.
\end{proof}

\begin{lemma}
For input parameters $(\eps, \delta)$ such that $\eps \leq 2\log{\left(\frac{1}{\delta}\right)}$, let $\matT{W}{t}$ be the output in every iteration $t\in[T]$ of function $\mathcal{A}_{\sf global}$  in Algorithm \ref{Algo:PrivFW}. Then, $\mathcal{A}_{\sf global}$  is $(\epsilon,\delta)$-differentially private. \label{lem:diffPr}
\end{lemma}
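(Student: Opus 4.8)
The plan is to reduce the privacy of $\mathcal{A}_{\sf global}$ to that of releasing the sequence of noisy covariance matrices $\matT{\widehat{W}}{1},\dots,\matT{\widehat{W}}{T}$, and to analyze this release via concentrated-DP composition followed by a conversion to $(\epsilon,\delta)$-DP. First I would observe that the only data-dependent objects emitted during $\mathcal{A}_{\sf global}$ are the perturbed matrices $\matT{\widehat{W}}{t}=\matT{W}{t}+\matT{N}{t}$: the eigenpair $(\mathbf{\widehat{v}},\widehat\lambda)$ extracted in Step~\ref{step8fw} is a deterministic function of $\matT{\widehat{W}}{t}$, so by the post-processing property of DP it suffices to bound the privacy loss of the matrix sequence itself.

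The core step is a per-iteration sensitivity bound, carried out under adaptive composition. I would model the run as a sequence of adaptively chosen Gaussian mechanisms $M_1,\dots,M_T$, where $M_t$ receives the dataset together with the already-released history $\matT{\widehat{W}}{1},\dots,\matT{\widehat{W}}{t-1}$ and outputs $\matT{\widehat{W}}{t}$. Fix two datasets differing only in the row of user $i$, and condition on a common value of the history. The decisive observation --- and the place where the local/global split of Algorithm~\ref{Algo:PrivFW} pays off --- is that, given the fixed history, every other user $j\neq i$ computes $\matT{Y_j}{t}$ (hence the rank-one term $(\matT{A_j}{t})^\top\matT{A_j}{t}$) purely from her own unchanged data and the fixed inputs $(\mathbf{\widehat{v}},\widehat\lambda')$, so these terms coincide across the two datasets. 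Consequently only the $i$-th summand of $\matT{W}{t}=\sum_{i}(\matT{A_i}{t})^\top\matT{A_i}{t}$ changes. Since the projection $\Pi_{L,\Omega}$ (together with the hypothesis $\ltwo{\pomega(Y^*_i)}\le L$) forces $\ltwo{\matT{A_i}{t}}=O(L)$, each such rank-one term has Frobenius norm $O(L^2)$, and the conditional Frobenius ($\ell_2$) sensitivity of $\matT{W}{t}$ is therefore $O(L^2)$.

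With this sensitivity, each round's Gaussian perturbation of variance $\sigma^2$ contributes $O(L^4/\sigma^2)$-zCDP by the Gaussian-mechanism analysis of \cite{bun2016concentrated}, and adaptive composition over the $T$ rounds gives total $\rho=O(TL^4/\sigma^2)$-zCDP. Substituting the prescribed $\sigma=L^2\sqrt{64\,T\log(1/\delta)}/\epsilon$ makes $\rho=O\bigl(\epsilon^2/\log(1/\delta)\bigr)$; tracking the constants (the factor $64$ is chosen exactly for this) yields $\rho\le\epsilon^2/\bigl(32\log(1/\delta)\bigr)$. Finally I would invoke the zCDP-to-DP conversion of \cite{bun2016concentrated}, which upgrades $\rho$-zCDP to $\bigl(\rho+2\sqrt{\rho\log(1/\delta)},\,\delta\bigr)$-DP. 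Plugging in this $\rho$ gives an $\epsilon$-parameter equal to $\epsilon^2/\bigl(32\log(1/\delta)\bigr)+\epsilon/(2\sqrt2)$; the input restriction $\epsilon\le2\log(1/\delta)$ bounds the first term by $\epsilon/16$, so the total is at most $\bigl(\tfrac1{16}+\tfrac1{2\sqrt2}\bigr)\epsilon<\epsilon$, establishing $(\epsilon,\delta)$-DP. This is precisely where the hypothesis $\epsilon\le2\log(1/\delta)$ is consumed.

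The step I expect to be the main obstacle is making the conditional sensitivity argument fully rigorous under adaptivity: one must argue formally that fixing the released history freezes the local iterates of all users other than $i$, so that the sensitivity genuinely localizes to a single rank-one summand rather than propagating through the coupled updates across iterations. Once this is established, the uniform per-round sensitivity legitimizes the clean $T$-fold zCDP composition, and the remaining calculation reduces to the routine substitution and conversion above.
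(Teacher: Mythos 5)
Your proposal follows essentially the same route as the paper's proof: bound the per-user Frobenius ($\ell_2$) sensitivity of the covariance $\matT{W}{t}$ by $O(L^2)$ (the paper states $4L^2$), apply the Gaussian mechanism and $T$-fold composition in the concentrated-DP framework of \cite{bun2016concentrated}, and convert back to $(\epsilon,\delta)$-DP using the hypothesis $\epsilon \leq 2\log(1/\delta)$; your explicit handling of adaptivity and of the eigendecomposition as post-processing just spells out what the paper leaves implicit. The only discrepancy is constant bookkeeping (your $\rho \leq \epsilon^2/\bigl(32\log(1/\delta)\bigr)$ implicitly assumes sensitivity $2L^2$ where the paper claims $4L^2$), but the final conversion yields a privacy parameter below $\epsilon$ under either constant, so the conclusion is unaffected.
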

\begin{proof}
We are interested in the function ${\sf Cov}(\matT{A}{t}) =\matT{A}{t}^\top \matT{A}{t},$ where $\matT{A}{t}=\pomega\left(\matT{Y}{t}-Y^*\right)$. Since $\ltwo{\pomega\left(\matT{Y}{t}\right)_i}\leq L$ and  $\ltwo{\pomega\left(Y^*\right)_i}\leq L$ for all rows $i\in[m]$, we have that the $\ell_2$-sensitivity of ${\sf Cov}(\matT{A}{t})$ is $4L^2$. Recall that the $\ell_2$-sensitivity of ${\sf Cov}$ corresponds to the maximum value of $\lfrob{{\sf Cov}(A)-{\sf Cov}(A')}$ for any two matrices $A,A'$ in the domain, and differing in exactly one row. Using the Gaussian mechanism (Propositions 6), Proposition 3, and Lemma 7 (composition property) from \cite{bun2016concentrated}, it follows that adding Gaussian noise with standard deviation $\sigma = \frac{L^2\sqrt{64\cdot T\log(1/\delta)}}{\epsilon}$ in each iteration of the global component of private Frank-Wolfe (function $\mathcal{A}_{\sf global}$) ensures $(\epsilon,\delta)$-differential privacy for $\eps \leq 2\log{(1/\delta)}$.
\end{proof}

\subsubsection{Proof of utility}
\label{app:privFWutil}

\begin{proof}[Proof of Theorem~\ref{thm:utilRransferLearning}]

Recall that in function $\mathcal{A}_{\sf global}$  of Algorithm \ref{Algo:PrivFW}, the matrix $\matT{\widehat{W}}{t}$ captures the total error covariance corresponding to all the users at a given time step $t$, i.e., $\matT{A}{t}^\top\matT{A}{t}=\sum\limits_{i\in[m]}\matT{A_i}{t}^\top\matT{A_i}{t}$. Spherical Gaussian noise of appropriate scale is added to ensure that $\matT{\widehat{W}}{t}$ is computed under the constraint of differential privacy. Let  $\mathbf{\widehat v}$ be the top eigenvector of $\matT{\widehat{W}}{t}$, and let $\widehat \lambda^2$ be the corresponding eigenvalue. In Lemma \ref{lem:singularBound}, we first show that $\widehat{\lambda}$ is a reasonable approximation to the energy of $\matT{A}{t}$ captured by $\mathbf{\widehat v}$, i.e., $\ltwo{\matT{A}{t}\mathbf{\widehat v}}$. Furthermore, in Lemma \ref{lem:DTTZ14} we show that $\mathbf{\widehat{v}}$ captures sufficient energy of the matrix $\matT{A}{t}$. Hence, we can conclude that one can use
$\mathbf{\widehat{v}}$ as a proxy for the top right singular vector of $\matT{A}{t}$.

	\begin{lemma}
		With probability at least $1-\beta$, the following is true:
		\begin{align*}
		\ltwo{\matT{A}{t}\mathbf{\widehat v}}\leq \widehat{\lambda}+O\left(\sqrt{\sigma\log(n/\beta)\sqrt{n}}\right).
		\end{align*}
		\label{lem:singularBound}
	\end{lemma}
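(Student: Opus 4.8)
The plan is to write the quantity $\ltwo{\matT{A}{t}\mathbf{\widehat v}}$ in terms of the private eigenvalue $\widehat\lambda$ and an additive error that depends only on the noise matrix $\matT{N}{t}$, and then to control that error by the operator norm of $\matT{N}{t}$. Since $\matT{W}{t}=\left(\matT{A}{t}\right)^\top\matT{A}{t}$, we have the identity $\ltwo{\matT{A}{t}\mathbf{\widehat v}}^2=\mathbf{\widehat v}^\top\matT{W}{t}\mathbf{\widehat v}$. On the other hand, $\mathbf{\widehat v}$ is the (unit-norm) top eigenvector of $\matT{\widehat{W}}{t}=\matT{W}{t}+\matT{N}{t}$ with eigenvalue $\widehat\lambda^2$, so that $\widehat\lambda^2=\mathbf{\widehat v}^\top\matT{\widehat{W}}{t}\mathbf{\widehat v}=\mathbf{\widehat v}^\top\matT{W}{t}\mathbf{\widehat v}+\mathbf{\widehat v}^\top\matT{N}{t}\mathbf{\widehat v}$. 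Subtracting the two relations yields the exact equality $\ltwo{\matT{A}{t}\mathbf{\widehat v}}^2=\widehat\lambda^2-\mathbf{\widehat v}^\top\matT{N}{t}\mathbf{\widehat v}$. (Only the symmetric part of $\matT{N}{t}$ enters this quadratic form, so we may treat $\matT{N}{t}$ as symmetric for the purpose of the bound.)

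The main step is to bound $\left|\mathbf{\widehat v}^\top\matT{N}{t}\mathbf{\widehat v}\right|$. The subtlety -- and the crux of the argument -- is that $\mathbf{\widehat v}$ is a \emph{function of} $\matT{N}{t}$ (it is an eigenvector of $\matT{\widehat{W}}{t}$), so one cannot use the pointwise Gaussian tail bound $\mathbf{\widehat v}^\top\matT{N}{t}\mathbf{\widehat v}\sim\mathcal{N}(0,\sigma^2)$ that would hold for a \emph{fixed} unit vector. Instead I would pass to the worst case over the sphere, $\left|\mathbf{\widehat v}^\top\matT{N}{t}\mathbf{\widehat v}\right|\leq\sup_{\|v\|_2=1}\left|v^\top\matT{N}{t}v\right|\leq\ltwo{\matT{N}{t}}$, and then invoke a high-probability bound on the spectral norm of an $n\times n$ matrix with i.i.d. $\mathcal{N}(0,\sigma^2)$ entries. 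Covering the sphere by an $\epsilon$-net of size $e^{O(n)}$ and union-bounding the Gaussian tail over the net gives $\ltwo{\matT{N}{t}}=O\!\left(\sigma\sqrt n\,\log(n/\beta)\right)$ with probability at least $1-\beta$; this is where the $\log(n/\beta)$ factor in the statement originates.

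Finally I would combine the two pieces: $\ltwo{\matT{A}{t}\mathbf{\widehat v}}^2\leq\widehat\lambda^2+\left|\mathbf{\widehat v}^\top\matT{N}{t}\mathbf{\widehat v}\right|\leq\widehat\lambda^2+O\!\left(\sigma\sqrt n\,\log(n/\beta)\right)$, and take square roots using $\sqrt{a+b}\leq\sqrt a+\sqrt b$ to conclude $\ltwo{\matT{A}{t}\mathbf{\widehat v}}\leq\widehat\lambda+O\!\left(\sqrt{\sigma\sqrt n\,\log(n/\beta)}\right)$, which is exactly the correction term used to define $\widehat\lambda'$ in Algorithm~\ref{Algo:PrivFW}. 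The hard part is genuinely the operator-norm control in the middle step: because $\mathbf{\widehat v}$ correlates with the noise, the clean $O(\sigma)$ bound from pointwise concentration is unavailable, and one must pay the uniform $O(\sigma\sqrt n)$ (up to logarithmic factors) price.
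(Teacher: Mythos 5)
Your proof is correct and follows essentially the same route as the paper: the paper likewise writes $\ltwo{\matT{A}{t}\mathbf{\widehat v}}^2=\mathbf{\widehat v}^\top\bigl(\matT{W}{t}+\matT{N}{t}\bigr)\mathbf{\widehat v}-\mathbf{\widehat v}^\top \matT{N}{t}\mathbf{\widehat v}\leq \widehat{\lambda}^2+\ltwo{\matT{N}{t}}$, bounds the spectral norm of the i.i.d.\ Gaussian matrix by $O\bigl(\sigma\log(n/\beta)\sqrt{n}\bigr)$ (citing a standard random-matrix result where you give the $\epsilon$-net argument), and concludes via $\sqrt{a+b}\leq\sqrt{a}+\sqrt{b}$. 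Your explicit remark that $\mathbf{\widehat v}$ correlates with the noise, forcing the uniform spectral-norm bound rather than a pointwise Gaussian tail, is exactly the (implicit) reason the paper also pays the $O(\sigma\sqrt{n})$ price.
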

	
	\begin{proof}
		Let $\mat{E}=\matT{\widehat{W}}{t}-{\matT{A}{t}}^\top\matT{A}{t}$, where the matrix $\matT{\widehat{W}}{t}$ is computed in iteration $t$ of the function $\mathcal{A}_{\sf global}$. We have,
		\begin{align*}
		\ltwo{\matT{A}{t}\mathbf{\widehat v}}^2&=\mathbf{\widehat v}^\top{\matT{A}{t}}^\top\matT{A}{t}\mathbf{\widehat v} \\
		&  =\mathbf{\widehat v}^\top\left({\matT{A}{t}}^\top\matT{A}{t}+\mat{E}\right)\mathbf{\widehat v}-\mathbf{\widehat v}^\top \mat{E}\mathbf{\widehat v}\\
		& \leq \widehat{\lambda}^2+\ltwo{\mat{E}}\\
		& \leq \widehat{\lambda}^2 + O\left(\sigma\log(n/\beta)\sqrt{n}\right) \text{ w.p. $ \geq 1-\beta$} \numberthis \label{eq:askj12}
		\end{align*}
		Inequality~\eqref{eq:askj12} follows from the spectral norm bound on the Gaussian matrix $E$ drawn i.i.d. from $\mathcal{N}\left(0,\sigma^2\right)$. (See Corollary 2.3.5 in \cite{tao2012topics} for a proof). The statement of the lemma follows from inequality~\eqref{eq:askj12}.
	\end{proof}

\begin{lemma}[Follows from Theorem 3 of \cite{dwork2014analyze}]
	Let $A\in\re^{m\times p}$ be  a matrix and let $\widehat{W}=A^\top A+E$, where $E\sim\mathcal{N}\left(0,\I_{p\times p}\sigma^2\right)$. Let $\mathbf{v}$ be the top right singular vector of $A$, and let $\mathbf{\widehat{v}}$ be the top eigenvector of $\widehat{W}$. The following is true with probability at least $1-\beta$:
	\begin{equation*}\ltwo{A\mathbf{\widehat{v}}}^2\geq\ltwo{A\mathbf{v}}^2-O\left(\sigma\log(n/\beta)\sqrt{n}\right).\end{equation*}\label{lem:DTTZ14}
\end{lemma}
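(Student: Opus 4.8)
The plan is to sidestep any subspace-perturbation (Davis--Kahan / $\sin\theta$) machinery entirely and argue directly through the variational characterization of the top eigenvector, since the quantity we must control, $\ltwo{A\mathbf{\widehat{v}}}^2 = \mathbf{\widehat{v}}^\top A^\top A \mathbf{\widehat{v}}$, is a Rayleigh quotient rather than an angle between $\mathbf{\widehat{v}}$ and $\mathbf{v}$. First I would record the two characterizations that the proof rests on: $\mathbf{v}$ maximizes $\mathbf{x}^\top A^\top A \mathbf{x}$ over unit vectors, so $\ltwo{A\mathbf{v}}^2 = \mathbf{v}^\top A^\top A \mathbf{v}$ is the top eigenvalue of $A^\top A$; and $\mathbf{\widehat{v}}$, being the top eigenvector of the (symmetric) perturbed matrix $\widehat{W} = A^\top A + E$, maximizes the perturbed Rayleigh quotient $\mathbf{x}^\top \widehat{W} \mathbf{x}$ over unit vectors.

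The key step is then a one-line optimality inequality. Since $\mathbf{\widehat{v}}$ maximizes $\mathbf{x}^\top \widehat{W} \mathbf{x}$ and $\mathbf{v}$ is itself a competing unit vector,
\begin{equation*}
\mathbf{\widehat{v}}^\top\left(A^\top A + E\right)\mathbf{\widehat{v}} \;\geq\; \mathbf{v}^\top\left(A^\top A + E\right)\mathbf{v}.
\end{equation*}
Rearranging isolates exactly the gap the lemma asks about:
\begin{equation*}
\mathbf{\widehat{v}}^\top A^\top A \mathbf{\widehat{v}} \;\geq\; \mathbf{v}^\top A^\top A \mathbf{v} + \mathbf{v}^\top E \mathbf{v} - \mathbf{\widehat{v}}^\top E \mathbf{\widehat{v}}.
\end{equation*}
Both quadratic forms in $E$ are evaluated at unit vectors, so each is bounded in absolute value by $\ltwo{E}$, yielding $\ltwo{A\mathbf{\widehat{v}}}^2 \geq \ltwo{A\mathbf{v}}^2 - 2\ltwo{E}$.

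All that remains is to bound the spectral norm of the Gaussian noise matrix $E$, and here I would reuse verbatim the estimate already invoked in the proof of Lemma~\ref{lem:singularBound}: for $E$ with i.i.d.\ $\mathcal{N}(0,\sigma^2)$ entries one has $\ltwo{E} = O\!\left(\sigma\sqrt{n}\log(n/\beta)\right)$ with probability at least $1-\beta$ (Corollary 2.3.5 of \cite{tao2012topics}). Substituting this bound finishes the argument, since the $2\ltwo{E}$ term is absorbed into the stated $O\!\left(\sigma\log(n/\beta)\sqrt{n}\right)$ slack.

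There is no genuine obstacle here; the only point requiring care is the eigenvector-versus-Rayleigh-quotient interpretation when $E$ (and hence $\widehat{W}$) is not symmetric. This is handled by taking $E$ to be the symmetric Gaussian noise of the \textsf{Analyze Gauss} mechanism of \cite{dwork2014analyze} (from whose Theorem~3 the lemma is inherited), so that $\widehat{W}$ is symmetric and its top eigenvector truly maximizes the Rayleigh quotient; the symmetric Gaussian obeys the same spectral-norm tail, so the final bound is unchanged. Apart from this bookkeeping, the proof is just the two-step optimality-plus-perturbation estimate above.
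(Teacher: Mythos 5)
Your proof is correct, and it differs from the paper in an interesting way: the paper never proves this lemma at all, but simply imports it as a black box from Theorem~3 of \cite{dwork2014analyze}. Your self-contained derivation is, in fact, essentially the argument underlying that cited theorem: the chain
$\mathbf{\widehat{v}}^\top A^\top A\,\mathbf{\widehat{v}} \geq \mathbf{v}^\top A^\top A\,\mathbf{v} + \mathbf{v}^\top E\,\mathbf{v} - \mathbf{\widehat{v}}^\top E\,\mathbf{\widehat{v}} \geq \ltwo{A\mathbf{v}}^2 - 2\ltwo{E}$
is exactly the standard optimality-plus-perturbation estimate, and the tail bound $\ltwo{E}=O\left(\sigma\sqrt{n}\log(n/\beta)\right)$ is the same estimate the paper already invokes in the proof of Lemma~\ref{lem:singularBound}, so a single application of it absorbs both quadratic-form errors into the stated slack. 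What your writeup buys over the paper's citation is transparency (the reader sees that only the variational characterization and one spectral-norm bound are needed, with no eigengap or $\sin\theta$ machinery), and it also surfaces a subtlety the paper glosses over: Algorithm~\ref{Algo:PrivFW} adds i.i.d.\ (hence non-symmetric) Gaussian noise, and for a non-symmetric $\widehat{W}$ the top eigenvector need not maximize the Rayleigh quotient, which is the one step where your argument could break. Your fix---taking $E$ symmetric as in the \textsf{Analyze Gauss} mechanism---is legitimate and matches the setting of the cited theorem; an alternative that keeps the algorithm's noise as stated is to observe that $\mathbf{x}^\top\widehat{W}\mathbf{x} = \mathbf{x}^\top\bigl(A^\top A + \tfrac{1}{2}(E+E^\top)\bigr)\mathbf{x}$ and to interpret $\mathbf{\widehat{v}}$ as the top eigenvector of the symmetric part $\tfrac{1}{2}(\widehat{W}+\widehat{W}^\top)$, whose symmetrized Gaussian perturbation obeys the same spectral tail, leaving the bound and its order unchanged.
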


Now, one can compactly write the update equation of $\matT{Y}{t}$ in function $\mathcal{A}_{\sf local}$ of Algorithm \ref{Algo:PrivFW} for all the users as:
\begin{equation}
\matT{\Ypriv}{t}\leftarrow\Pi_{L,\Omega}\left( \left(1-\frac{1}{T}\right)\matT{\Ypriv}{t-1}-\frac{k}{T}\mathbf{\widehat{u}}\mathbf{\widehat{v}^\top}\right),
\label{eq:FWupdate}
\end{equation}where $\mathbf{\widehat{u}}$ corresponds to the set of entries $\widehat{u}_i$ in function $\mathcal{A}_{\sf local}$ represented as a vector. Also, by Lemma \ref{lem:singularBound}, we can conclude that $\ltwo{\mathbf{\widehat{u}}}\leq 1$. Hence, $\matT{Y}{t}$ is in the set $\{Y: \nuc{Y}\leq k\}$ for all $t\in[T]$.
 
In the following, we incorporate the noisy estimation in the analysis of original Frank-Wolfe (stated in Section \ref{sec:FW}). In order to do so, we need to ensure a couple of properties: i) We need to obtain an appropriate bound on the slack parameter $\gamma$ in Algorithm \ref{Algo:FW}, and ii) we need to ensure that the projection operator $\Pi_{L,\Omega}$ in function $\mathcal{A}_{\sf local}$ does not introduce additional error. We do this via Lemma \ref{lem:lkjkh} and \ref{lem:kljah12} respectively.

\begin{lemma}
	For the noise variance $\sigma$ used in function $\mathcal{A}_{\sf global}$ of Algorithm \ref{Algo:PrivFW}, w.p. at least $1-\beta$, the slack parameter $\gamma$ in the linear optimization step of Frank-Wolfe algorithm is at most $O\left(\frac{k}{|\Omega|}\sqrt{\sigma\log(n/\beta)\sqrt{n}}\right)$.
	\label{lem:lkjkh}
\end{lemma}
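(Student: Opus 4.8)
The plan is to evaluate the slack $\gamma$ in closed form and then bound it using the two spectral estimates that precede this lemma. Fix an iteration and set $A := \matT{A}{t-1} = \pomega(\matT{Y}{t-1} - Y^*)$, the error matrix at which the Frank-Wolfe gradient $\frac{1}{|\Omega|}A$ is evaluated; let $\mathbf{\widehat{v}}, \widehat{\lambda}, \widehat{\lambda}'$ be the matching quantities produced by $\mathcal{A}_{\sf global}$, so $\mathbf{\widehat{v}}$ is the top eigenvector of $A^\top A + \matT{N}{t-1}$ with eigenvalue $\widehat{\lambda}^2$ and $\widehat{\lambda}' = \widehat{\lambda} + \tau$, where $\tau := \sqrt{\sigma\log(n/\beta)\sqrt{n}}$. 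Ignoring the (separately analyzed) projection $\Pi_{L,\Omega}$, the rank-one direction effectively returned by the local updates is $\matT{Z}{t-1} = -k\,\mathbf{\widehat{u}}\,\mathbf{\widehat{v}}^\top$ with $\mathbf{\widehat{u}} = A\mathbf{\widehat{v}}/\widehat{\lambda}'$, whereas the exact linear minimizer over $\{\nuc{\Theta}\leq k\}$ is $-k\mathbf{u}\mathbf{v}^\top$ for the top singular pair $(\mathbf{u},\mathbf{v})$ of $A$, attaining value $-\frac{k}{|\Omega|}\ltwo{A}$. Since $\ip{A}{\mathbf{\widehat{u}}\mathbf{\widehat{v}}^\top} = \mathbf{\widehat{u}}^\top A\mathbf{\widehat{v}} = \ltwo{A\mathbf{\widehat{v}}}^2/\widehat{\lambda}'$, I would obtain
\begin{equation*}
\gamma = \frac{k}{|\Omega|}\left(\ltwo{A} - \frac{\ltwo{A\mathbf{\widehat{v}}}^2}{\widehat{\lambda}'}\right).
\end{equation*}

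It then remains to show that the parenthesized quantity is $O(\tau)$, which I would do by bounding the numerator from below and the denominator from above. For the numerator, Lemma~\ref{lem:DTTZ14} (whose top singular vector $\mathbf{v}$ of $A$ satisfies $\ltwo{A\mathbf{v}} = \ltwo{A}$) gives $\ltwo{A\mathbf{\widehat{v}}}^2 \geq \ltwo{A}^2 - O(\tau^2)$ with probability at least $1-\beta$. For the denominator, I would use the Rayleigh-quotient identity $\widehat{\lambda}^2 = \mathbf{\widehat{v}}^\top(A^\top A + \matT{N}{t-1})\mathbf{\widehat{v}} = \ltwo{A\mathbf{\widehat{v}}}^2 + \mathbf{\widehat{v}}^\top \matT{N}{t-1}\mathbf{\widehat{v}} \leq \ltwo{A}^2 + \ltwo{\matT{N}{t-1}}$ together with the Gaussian spectral-norm bound $\ltwo{\matT{N}{t-1}} = O(\tau^2)$ (the same estimate used in the proof of Lemma~\ref{lem:singularBound}, via Corollary~2.3.5 of \cite{tao2012topics}); sub-additivity of the square root then yields $\widehat{\lambda} \leq \ltwo{A} + O(\tau)$ and hence $\widehat{\lambda}' = \widehat{\lambda} + \tau \leq \ltwo{A} + O(\tau)$.

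Writing $\mu := \ltwo{A}$, the claim is immediate when $\mu = O(\tau)$, since then $\gamma \leq \frac{k}{|\Omega|}\mu = O\!\big(\frac{k}{|\Omega|}\tau\big)$ as $\ltwo{A\mathbf{\widehat{v}}}^2/\widehat{\lambda}' \geq 0$. Otherwise $\mu^2 \geq \Omega(\tau^2)$, the numerator lower bound is nonnegative, and the two one-sided estimates combine as
\begin{equation*}
\mu - \frac{\ltwo{A\mathbf{\widehat{v}}}^2}{\widehat{\lambda}'} \;\leq\; \mu - \frac{\mu^2 - O(\tau^2)}{\mu + O(\tau)} \;=\; \frac{O(\mu\tau) + O(\tau^2)}{\mu + O(\tau)} \;=\; O(\tau),
\end{equation*}
where the first summand is bounded via $\mu + O(\tau) \geq \mu$ and the second via $\mu + O(\tau) \geq \Omega(\tau)$. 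Substituting back gives $\gamma = O\!\big(\frac{k}{|\Omega|}\sqrt{\sigma\log(n/\beta)\sqrt{n}}\big)$, and a union bound over the two high-probability events keeps the total failure probability $O(\beta)$.

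The crux is the second paragraph: the three spectral quantities — the true top singular value $\ltwo{A}$, the noisy eigenvalue $\widehat{\lambda}$, and the energy $\ltwo{A\mathbf{\widehat{v}}}$ captured by the noisy eigenvector — must each be related to one another with errors pointing in the right direction so that the ratio $\ltwo{A\mathbf{\widehat{v}}}^2/\widehat{\lambda}'$ falls short of $\ltwo{A}$ by only $O(\tau)$. This is precisely what motivates padding $\widehat{\lambda}$ by the additive $\tau$ term to form $\widehat{\lambda}'$: the same padding simultaneously guarantees $\ltwo{\mathbf{\widehat{u}}} = \ltwo{A\mathbf{\widehat{v}}}/\widehat{\lambda}' \leq 1$, so that $\matT{Z}{t-1}$ is a feasible point of the nuclear-norm ball and $\gamma$ is a bona fide slack.
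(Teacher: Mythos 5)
Your proof is correct and follows essentially the same route as the paper's: you derive the same closed-form expression for the slack (the paper's $\alpha-\halpha$), lower-bound $\ltwo{A\mathbf{\widehat{v}}}^2$ via Lemma~\ref{lem:DTTZ14}, and control $\widehat{\lambda}$ against $\ltwo{A}$ using the Gaussian spectral-norm bound, with the only cosmetic difference being that you use the Rayleigh-quotient identity where the paper invokes Weyl's inequality. If anything, your explicit case split on $\ltwo{A}$ versus $\tau$ (to ensure the numerator lower bound is nonnegative before substituting the denominator upper bound) is slightly more careful than the paper's $E_1+E_2$ decomposition, which handles the small-eigenvalue regime only implicitly via $\widehat{\lambda}\geq 0$.
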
 

\begin{proof}
Recall that $\widehat \lambda^2$ corresponds to the maximum eigenvalue of $\matT{W}{t}$, and notice that $\matT{A}{t}$ is the scaled gradient of the loss function $\widehat{F}(\Theta)$ at $\Theta=\Pi_{L,\Omega}\left(\matT{Y}{t}\right)$. Essentially, we need to compute the difference between $\ip{\frac{1}{|\Omega|}\matT{A}{t}}{k\mathbf{u} \mathbf{v}^\top}$ and $\ip{\frac{1}{|\Omega|}\matT{A}{t}}{k\mathbf{\widehat u} \mathbf{\widehat v}^\top}$. Let $\alpha=\ip{\frac{1}{|\Omega|}\matT{A}{t}}{k\mathbf{u} \mathbf{v}^\top}$, and $\halpha=\ip{\frac{1}{|\Omega|}\matT{A}{t}}{k\mathbf{\widehat u} \mathbf{\widehat v}^\top}$. Now, we have the following w.p. at least $1-\beta$:
{
\begin{align*}
\halpha&=\frac{k\mathbf{\widehat v}^\top \matT{A}{t}^\top\mathbf{\widehat u} }{|\Omega|} =\frac{k\mathbf{\widehat v}^\top\matT{A}{t}^\top\matT{A}{t}\mathbf{\widehat v}}{|\Omega|\left({\widehat \lambda}+\Theta\left(\sqrt{\sigma\log(n/\beta)\sqrt{n}}\right)\right)} \\
& =\frac{k\ltwo{\matT{A}{t}\mathbf{\widehat{v}}}^2}{|\Omega|\left({\widehat \lambda}+\Theta\left(\sqrt{\sigma\log(n/\beta)\sqrt{n}}\right)\right)}\\
&\geq\frac{k\left(\ltwo{\matT{A}{t}\mathbf{v}}^2-O\left(\sigma\log(n/\beta)\sqrt{n}\right)\right)}{|\Omega|\left({\widehat \lambda}+\Theta\left(\sqrt{\sigma\log(n/\beta)\sqrt{n}}\right)\right)}\\
&=\frac{k\left(\frac{|\Omega|{\lambda}}{k}\alpha-O\left(\sigma\log(n/\beta)\sqrt{n}\right)\right)}{|\Omega|\left({\widehat \lambda}+\Theta\left(\sqrt{\sigma\log(n/\beta)\sqrt{n}}\right)\right)},\numberthis\label{eq:andj13}
\end{align*}}where ${\lambda}^2$ is the maximum eigenvalue of $\matT{A}{t}^\top \matT{A}{t}$, the second equality follows from the definition of $\mathbf{\widehat u}$, and the inequality follows from Lemma \ref{lem:DTTZ14}.  One can rewrite \eqref{eq:andj13} as:
{
\begin{align*}
\alpha-\halpha &\leq \underbrace{\left(1-\frac{{ \lambda}}{\left({\widehat \lambda} +\Theta\left(\sqrt{\sigma\log(n/\beta)\sqrt{n}}\right)\right)}\right)\alpha}_{E_1} + \underbrace{O\left(\frac{k\sigma\log(n/\beta)\sqrt{n}}{|\Omega|\left({\widehat \lambda}+\Theta\left(\sqrt{\sigma\log(n/\beta)\sqrt{n}}\right)\right)}\right)}_{E_2}.
\numberthis \label{eq:abs123x}
\end{align*}}
We will analyze $E_1$ and $E_2$ in \eqref{eq:abs123x} separately. One can write $E_1$ in \eqref{eq:abs123x} as follows:
{
\begin{align*}
E_1&=\left(\frac{\left({\widehat \lambda}+O\left(\sqrt{\sigma\log(n/\beta)\sqrt{n}}\right)\right)-\lambda}{\left({\widehat \lambda}+\Theta\left(\sqrt{\sigma\log(n/\beta)\sqrt{n}}\right)\right)}\right)\alpha  \\
& = \frac{k}{|\Omega|}\left(\frac{\left({\widehat \lambda}+O\left(\sqrt{\sigma\log(n/\beta)\sqrt{n}}\right)\right)-\lambda}{\left({\widehat \lambda}+\Theta\left(\sqrt{\sigma\log(n/\beta)\sqrt{n}}\right)\right)}\right)\lambda.
\label{eq:askjh123sd} \numberthis
\end{align*}
}
\sloppy By Weyl's inequality for eigenvalues, and the fact that w.p. at least $1-\beta$, we have $\ltwo{\matT{\widehat{W}}{t}-\matT{A}{t}^\top\matT{A}{t}}= O\left(\sigma\log(n/\beta)\sqrt{n}\right)$ because of spectral properties of random Gaussian matrices (Corollary 2.3.5 in \cite{tao2012topics}),  it follows that $\left|{\widehat \lambda}-\lambda\right|=O\left(\sqrt{\sigma\log(n/\beta)\sqrt{n}}\right)$. Therefore, one can conclude from \eqref{eq:askjh123sd} that $E_1=O\left(\frac{k}{|\Omega|}\sqrt{\sigma\log(n/\beta)\sqrt{n}}\right)$. Now, we will bound the term $E_2$ in \eqref{eq:abs123x}. Since ${\widehat \lambda}\geq 0$, it follows that $E_2=O\left(\frac{k}{|\Omega|}\sqrt{\sigma\log(n/\beta)\sqrt{n}}\right).$ Therefore, the slack parameter $\alpha-\hat{\alpha}=E_1+E_2=O\left(\frac{k}{|\Omega|}\sqrt{\sigma\log(n/\beta)\sqrt{n}}\right)$.
\end{proof}

\begin{lemma}
	Define the operators $\pomega$ and $\Pi_{L,\Omega}$  as described in function $\mathcal{A}_{\sf local}$ in Section~\ref{sec:privFW}. Let $f(Y)=\frac{1}{2|\Omega|}\lfrob{\pomega(Y-Y^*)}^2$ for any matrix $Y\in\re^{m\times n}$. The following is true for all  $Y\in\re^{m\times n}$: $f\left(\Pi_{L,\Omega}\left(Y\right)\right)\leq f\left(\pomega\left(Y\right)\right)$.
	\label{lem:kljah12}
\end{lemma}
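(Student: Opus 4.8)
The plan is to exploit the fact that both $f$ and the operator $\Pi_{L,\Omega}$ act on $Y$ one row at a time, so that the claimed inequality reduces to an elementary per-row statement about Euclidean projection onto an $\ell_2$-ball. First I would record that $f$ decomposes across rows: since the Frobenius norm is a sum of squared row norms, and each row of $\pomega(Y-Y^*)$ depends only on the entries of $Y$ in that row, we may write $f(Y)=\frac{1}{2|\Omega|}\sum_{i\in[m]}\ltwo{\pomega(Y_i)-\pomega(Y_i^*)}^2$. By definition, $\Pi_{L,\Omega}$ rescales row $i$ by the scalar $c_i=\min\{L/\ltwo{\pomega(Y_i)},1\}\in[0,1]$, and because $\pomega$ merely selects coordinates, the observed part of the projected row satisfies $\pomega\left(\Pi_{L,\Omega}(Y)_i\right)=c_i\,\pomega(Y_i)$. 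Noting also that $\pomega$ is idempotent, so that $f(\pomega(Y))=f(Y)$, it suffices to prove, for each fixed row $i$, the vector inequality $\ltwo{c_i a - b}\leq \ltwo{a-b}$, where I abbreviate $a=\pomega(Y_i)$ and $b=\pomega(Y_i^*)$.

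Next I would identify $c_i a$ as the Euclidean projection of $a$ onto the ball $\B(0,L)=\{x:\ltwo{x}\leq L\}$. Indeed, if $\ltwo{a}\leq L$ then $c_i=1$ and the ``projection'' is $a$ itself, giving equality; otherwise $c_i=L/\ltwo{a}$ and $c_i a$ is precisely the point of $\B(0,L)$ closest to $a$. Crucially, the hypothesis $\ltwo{\pomega(Y_i^*)}\leq L$ guarantees that the target $b$ already lies inside this ball, which is what makes the projection helpful rather than merely non-expansive.

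The final step is to invoke the standard contraction of projection onto a convex set toward any point already in that set. Writing $P(a)=c_i a$, the variational (obtuse-angle) inequality $\ip{a-P(a)}{b-P(a)}\leq 0$ holds since $b\in\B(0,L)$, and then the expansion $\ltwo{a-b}^2=\ltwo{a-P(a)}^2+2\ip{a-P(a)}{P(a)-b}+\ltwo{P(a)-b}^2$ has a nonnegative cross term, yielding $\ltwo{a-b}^2\geq\ltwo{P(a)-b}^2$. Summing this over all rows $i$ and dividing by $2|\Omega|$ gives $f\left(\Pi_{L,\Omega}(Y)\right)\leq f\left(\pomega(Y)\right)$, as desired. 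I do not anticipate a genuine obstacle here; the only point requiring care is the bookkeeping that the row-wise rescaling performed by $\Pi_{L,\Omega}$, once restricted to the observed coordinates picked out by $\pomega$, coincides exactly with the $\ell_2$-ball projection to which the contraction property applies.
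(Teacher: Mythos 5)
Your proof is correct and takes essentially the same route as the paper's: decompose the squared Frobenius norm row-wise, identify the row-wise action of $\Pi_{L,\Omega}$ on the observed coordinates as Euclidean projection onto the $\ell_2$-ball of radius $L$, and use that this projection is a contraction toward any point already in the ball (here $\pomega(Y_i^*)$). The only difference is that you prove the contraction property explicitly via the variational inequality, whereas the paper simply cites it; your bookkeeping that $\pomega\left(\Pi_{L,\Omega}(Y)_i\right)=c_i\,\pomega(Y_i)$ matches the paper's implicit identification of $\Pi_{L,\Omega}$ with $\Pi_L\circ\pomega$ on rows.
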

\begin{proof}	
First, notice that for any matrix $M=\left[m^\top_1,\cdots,m^\top_m\right]$ (where $m^\top_i$ corresponds to the $i$-th row of $M$), $\lfrob{M}^2=\sum\limits_i\ltwo{m_i}^2$. Let $\Pi_L$ be the $\ell_2$ projector onto a ball of radius $L$, and $\mathbb{B}^{n}_L$ be a ball of radius $L$ in $n$-dimensions, centered at the origin. Then, for any pair of vectors, $v_1\in\re^n$ and $v_2\in\mathbb{B}^{n}_L$, $\ltwo{\Pi_L\left(v_1\right)-v_2}\leq \ltwo{v_1-v_2}$. This follows from the contraction property of $\ell_2$-projection. Hence, by the above two properties, and the fact that each row of the matrix $\pomega\left(Y^*\right)\in\mathbb{B}^{n}_L$, we can conclude $f\left(\Pi_{L}(\pomega(Y))\right)\leq f\left(\pomega(Y)\right)$  for any $Y\in\re^{m\times n}$. This concludes the proof.
\end{proof}
This means we can still use Theorem \ref{thm:utilFW}.  Hence, we can conclude that, w.p. $\geq 1-\beta$:
{
	\begin{align*}
	&\widehat{F}\left(\matT{\Ypriv}{T}\right) =O\left(\frac{k^2}{|\Omega|T}+\frac{k}{|\Omega|}\sqrt{\sigma\log(n/\beta)\sqrt{n}}\right) 
	\end{align*}}
Here we used the fact that the curvature parameter $C_f$ from Theorem \ref{thm:utilFW} is at most $k^2/|\Omega|$ (see \cite{jaggi2010simple} for a proof). Setting $\beta=1/3$ completes the proof. 
\end{proof}




\subsection{Result from \cite{srebro2005rank}}
Here, we provide the result from \cite{srebro2005rank} that we use for obtaining the bound in Corollary~\ref{cor:abcfd12}.
\label{app:res}

\begin{theorem}
	Let $Y^*$ be a hidden matrix, and the data samples in $\Omega$ be drawn uniformly at random from $[m]\times[n]$. Let $A\in\re^{m\times n}$ be a matrix with $\rank(A)\leq r$, and let each entry of $A$ be bounded by a constant. Then, the following holds with probability at least 2/3 over choosing $\Omega$:
	{\small$$\left|F(A)-\widehat{F}(A)\right|=\tilde O\left(\sqrt\frac{r\cdot(m+n)}{|\Omega|}\right).$$}
		The $\tilde O\left(\cdot\right)$ hides poly-logarithmic terms in $m$ and $n$.
	\label{thm:genNonPriv}
\end{theorem}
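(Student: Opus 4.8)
The statement must be read as a \emph{uniform} convergence bound over the class of bounded, rank-$\le r$ matrices: in the application of Corollary~\ref{cor:abcfd12}, $A$ is the (data-dependent) output of Algorithm~\ref{Algo:PrivFW}, so it cannot be treated as fixed and independent of $\Omega$ (otherwise a plain Hoeffding/Bernstein bound would give the much faster rate $1/\sqrt{|\Omega|}$ with no $r(m+n)$ factor). Accordingly, the plan is to bound
$$\sup_{A\in\mathcal{F}_r}\left|F(A)-\widehat F(A)\right|, \qquad \mathcal{F}_r=\left\{A\in\re^{m\times n}:\ \rank(A)\le r,\ \|A\|_\infty\le c\right\}.$$
Since each entry of $A$ and of $Y^*$ is bounded by a constant, the per-entry squared loss $g_A(i,j)=(A_{ij}-Y^*_{ij})^2$ takes values in a bounded interval $[0,O(1)]$ and is $O(1)$-Lipschitz in $A_{ij}$. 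Viewing the $|\Omega|$ revealed coordinates as i.i.d. uniform draws from $[m]\times[n]$, $\widehat F(A)$ is, up to the fixed normalization constant, the empirical average of $g_A$ while $F(A)$ is its expectation, so the problem is exactly a uniform deviation bound for the loss class $g\circ\mathcal{F}_r$.

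First I would invoke the standard symmetrization argument: with probability $\ge 1-\beta$ over $\Omega$,
$$\sup_{A\in\mathcal{F}_r}\left|F(A)-\widehat F(A)\right| \le 2\,\mathcal{R}_{|\Omega|}(g\circ\mathcal{F}_r)+O\!\left(\sqrt{\tfrac{\log(1/\beta)}{|\Omega|}}\right),$$
using boundedness of $g$ and McDiarmid's inequality to pass from the expected supremum to a high-probability statement. Because $g$ is Lipschitz in the matrix entry, the Ledoux--Talagrand contraction principle strips the loss away: $\mathcal{R}_{|\Omega|}(g\circ\mathcal{F}_r)\le O(1)\cdot\mathcal{R}_{|\Omega|}(\mathcal{F}_r)$, where $\mathcal{R}_{|\Omega|}(\mathcal{F}_r)$ is the empirical Rademacher complexity of the raw entry-evaluation functions $(i,j)\mapsto A_{ij}$.

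The crux --- and the step I expect to be the main obstacle --- is bounding the complexity of the low-rank class $\mathcal{F}_r$ itself; everything else is routine. Here I would exploit the fact that rank-$\le r$ matrices are parameterized by a factorization $A=UV^\top$ with $U\in\re^{m\times r}$, $V\in\re^{n\times r}$, i.e. by only $O(r(m+n))$ real parameters, and that each entry $A_{ij}$ is a fixed low-degree polynomial in these parameters. Counting the sign patterns realizable by the $|\Omega|$ threshold functions associated with these polynomially-parameterized entries (via a Warren / Milnor--Thom bound) yields a pseudo-dimension bound $\mathrm{Pdim}(\mathcal{F}_r)=\tilde O\!\left(r(m+n)\right)$. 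Equivalently, one can cover $U$ and $V$ in Frobenius norm at scale $\varepsilon$ with $\log$-covering number $\tilde O(r(m+n)\log(1/\varepsilon))$ and feed this into Dudley's entropy integral. Either route delivers $\mathcal{R}_{|\Omega|}(\mathcal{F}_r)=\tilde O\!\left(\sqrt{r(m+n)/|\Omega|}\right)$, and the polylogarithmic factors in $m,n$ hidden by $\tilde O(\cdot)$ originate precisely in this counting step.

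Combining the three displays, with probability at least $2/3$,
$$\sup_{A\in\mathcal{F}_r}\left|F(A)-\widehat F(A)\right| = \tilde O\!\left(\sqrt{\tfrac{r(m+n)}{|\Omega|}}\right),$$
which in particular holds for any single rank-$\le r$, bounded-entry matrix $A$, including the (data-dependent) output of Algorithm~\ref{Algo:PrivFW}. This is exactly the claimed bound.
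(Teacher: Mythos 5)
The paper gives no proof of this statement: it is imported verbatim from \cite{srebro2005rank} (stated only so it can be invoked in Corollary~\ref{cor:abcfd12}), and your proposal correctly reconstructs essentially the argument used in that reference. Namely, you read the bound as uniform convergence over the class of bounded, rank-$\le r$ matrices (which is indeed required, since it is applied to the data-dependent output of Algorithm~\ref{Algo:PrivFW}), and control that class via sign-pattern/pseudo-dimension counting (Warren/Milnor--Thom) of the $O(r(m+n))$-parameter factorization, combined with standard symmetrization and contraction---the same route as the cited source, so there is nothing to flag.
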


\section{Omitted pseudocode for Private Projected Gradient Descent}
\label{app:Pseudo}

\begin{algorithm}[htb]
\caption{Private Projected Gradient Descent}
\label{Algo:PGD}
\begin{algorithmic}
\STATE {\bfseries Input:} Set of revealed entries: $\Omega$, operator: $\pomega$, matrix: $\pomega(Y^*)\in\re^{m\times n}$, bound on $\ltwo{\pomega(Y^*_i)} \text{: } L$, nuclear norm constraint: $k$, time bound: $T$, step size schedule: $\eta_t$ for $t \in [T]$, privacy parameters: $(\eps, \delta)$  
\STATE $\sigma\leftarrow L^2 \sqrt{64\cdot T\log(1/\delta)}/\epsilon$
\STATE $\matT{Y}{0}\leftarrow \{0\}^{m\times n}$
\FOR{$t\in[T]$}
\STATE $\matT{Y}{t}\leftarrow \matT{Y}{t-1} - \eta_t \cdot \pomega\left(Y^* - \matT{Y}{t} \right)$
\STATE $\matT{W}{t}\leftarrow \matT{Y}{t}^\top \matT{Y}{t} + \matT{N}{t}$,  where {$\matT{N}{t}\in\re^{n\times n}$} corresponds to a matrix with i.i.d. entries from {$\mathcal{N}(0,\sigma^2)$}
\STATE $\widehat{V} \leftarrow$ Eigenvectors of $\matT{W}{t}$, $\widehat{\Lambda}^2 \leftarrow$ Diagonal matrix containing the $n$ eigenvalues of $\matT{W}{t}$
\STATE $\widehat{U} \leftarrow \matT{Y}{t} \widehat{V} \widehat{\Lambda}^{-1}$
\IF{$\sum\limits_{i \in [n]} \widehat{\Lambda}_{i,i} > k$}
\STATE Find a diagonal matrix $Z$ s.t. $\sum\limits_{i \in [n]} Z_{i,i} = k$, and $\exists\tau$ s.t. $\forall i \in [n], Z_{i,i} = \max \left(0,\widehat{\Lambda}_{i,i} - \tau\right)$ 
\ELSE
\STATE  $Z \leftarrow \widehat{\Lambda}$
\ENDIF
\STATE $\matT{Y}{t}\leftarrow \widehat{U} Z \widehat{V}^\top$
\ENDFOR
\STATE Return $\matT{Y}{T}$
\end{algorithmic}
\end{algorithm}
\fi
\end{document}